\def\input@path{{../fig/}}
\newcommand{\tp }{{\scriptscriptstyle\mathsf{T}}}
\newcommand{\real}{\mathbb{R}}
\newcommand{\cP}{\mathcal{P}}
\newcommand{\cV}{\mathcal{V}}
\newcommand{\UF}{\operatorname{UF}}
\DeclareMathOperator{\PL}{Pol}
\DeclareMathOperator{\RL}{Rat}
\newcommand{\R}{\mathcal{N}}
\newcommand\pR{\mathcal{N}_c}
\newtheorem{theorem}{Theorem}[section]
\newtheorem{lemma}[theorem]{Lemma}
\newtheorem{proposition}[theorem]{Proposition}
\newtheorem{definition}{Definition}[section]
\newtheorem{corollary}[theorem]{Corollary}
\newtheorem*{question}{Question}
\newtheorem*{theorem*}{Theorem}
\newtheorem*{proposition*}{Proposition}
\renewcommand\binom[2]{\stackMath\mathop{%
\scaleleftright[1.5ex]{(}{\stackanchor[1.8ex]{#1}{#2}}{)}}}
\newcommand\restr[2]{{
  \left.\kern-\nulldelimiterspace 
  #1 
  \right|_{#2} 
  }}
\pgfplotsset{soldot/.style={color=blue,only marks,mark=*}} \pgfplotsset{holdot/.style={color=blue,fill=white,only marks,mark=*}}
\definecolor{cof}{RGB}{219,144,71}
\definecolor{pur}{RGB}{186,146,162}
\definecolor{greeo}{RGB}{91,173,69}
\definecolor{greet}{RGB}{52,111,72}
\definecolor{lav}{RGB}{147,141,210}
\definecolor{sepia}{rgb}{0.44, 0.26, 0.08}
\definecolor{skobeloff}{rgb}{0.0, 0.48, 0.45}
\definecolor{antiquebrass}{rgb}{0.8, 0.58, 0.46}
\newcommand{\supplFileName}{supp} 
\icmltitlerunning{Tropical Geometry of Deep Neural Networks}
\begin{document}

\twocolumn[
\icmltitle{Tropical Geometry of Deep Neural Networks}



\icmlsetsymbol{equal}{*}

\begin{icmlauthorlist}
\icmlauthor{Liwen Zhang}{cs}
\icmlauthor{Gregory Naitzat}{stats}
\icmlauthor{Lek-Heng Lim}{stats,cam}
\end{icmlauthorlist}

\icmlaffiliation{cs}{Department of Computer Science, University of Chicago, Chicago, IL}
\icmlaffiliation{stats}{Department of Statistics, University of Chicago, Chicago, IL}
\icmlaffiliation{cam}{Computational and Applied Mathematics Initiative, University of Chicago, Chicago, IL}

\icmlcorrespondingauthor{Lek-Heng Lim}{lekheng@galton.uchicago.edu}

\icmlkeywords{Neural network, tropical geometry}

\vskip 0.3in
]



\printAffiliationsAndNotice{}  

\begin{abstract}
We establish, for the first time, connections between feedforward neural networks  with ReLU activation and tropical geometry --- we show that the family of such neural networks is equivalent to the family of tropical rational maps.
Among other things, we deduce that feedforward ReLU neural networks with one hidden layer can be characterized by zonotopes, which serve as building blocks for deeper networks;
we relate decision boundaries of such neural networks  to tropical hypersurfaces, a major object of study in tropical geometry; and we prove that linear regions of such neural networks correspond to vertices of polytopes associated with tropical rational functions.
An insight from our tropical formulation is that a deeper network is exponentially more expressive than a shallow network.
\end{abstract}

\section{Introduction}\label{sec:introduction}
Deep neural networks have recently received much limelight for their enormous success in a variety of applications across many different areas of artificial intelligence,  computer vision, speech recognition, and natural language processing \cite{lecun2015deep, hinton2012deep, krizhevsky2012imagenet, bahdanau2014neural, kalchbrenner2013recurrent}. Nevertheless, it is also well-known that our  theoretical understanding of their efficacy remains incomplete.

There have been several attempts to analyze deep neural networks from different perspectives.  
Notably, earlier studies have suggested that a deep architecture could use parameters more efficiently
and requires exponentially fewer parameters to express certain families of functions than a shallow architecture \cite{delalleau2011shallow, bengio2011expressive, montufar2014number, eldan2016power, poole2016exponential, arora2018understanding}.
Recent work \cite{zhang2016understanding} showed that several successful neural networks possess a high representation power and can easily shatter random data. However, they also generalize well to data unseen during training stage, suggesting that such networks may have some implicit regularization.
Traditional measures of complexity such as VC-dimension and Rademacher complexity fail to explain this phenomenon.
Understanding this implicit regularization that begets the generalization power of deep neural networks  remains a challenge.

The goal of our work is to establish connections between neural network and tropical geometry in the hope that they will shed light on the workings of deep neural networks.
Tropical geometry is a new area in algebraic geometry that has seen an explosive growth in the recent decade but remains relatively obscure outside pure mathematics.
We will focus on feedforward neural networks with rectified linear units (ReLU) and show that they are analogues of \emph{rational functions}, i.e., ratios of two multivariate polynomials $f,g$ in variables $x_1,\dots,x_d$,
\[
\frac{f(x_1,\dots, x_d)}{g(x_1,\dots, x_d)},
\]
in \emph{tropical algebra}. 
For standard and trigonometric polynomials, it is known that \emph{rational approximation} --- approximating a target function by a ratio of two polynomials instead of a single polynomial --- vastly improves the quality of approximation without increasing the degree. 
This gives our analogue: An ReLU neural network is the tropical ratio of two tropical polynomials, i.e., a tropical rational function. 
More precisely, if we view a neural network as a function $\nu : \mathbb{R}^d \to \mathbb{R}^p$, $x = (x_1,\dots,x_d) \mapsto (\nu_1(x),\dots,\nu_p(x))$, then each $\nu$ is a tropical rational map, i.e., each $\nu_i$ is a tropical rational function. In fact, we will show that:
\begin{quote}
\emph{the family of functions represented by feedforward neural networks with rectified linear units and integer weights is exactly the family of tropical rational maps}.
\end{quote}
It immediately follows that there is a \emph{semifield} structure on this family of functions.
More importantly, this establishes a bridge between neural networks\footnote{Henceforth  a ``neural network'' will always mean a feedforward neural network with ReLU activation.} and tropical geometry that allows us to view neural networks as well-studied tropical geometric objects.
This insight allows us to closely relate boundaries between linear regions of a neural network  to tropical hypersurfaces and thereby facilitate studies of decision boundaries of neural networks in classification problems as tropical hypersurfaces.
Furthermore, the number of linear regions, which captures the  complexity of a neural network \citep{montufar2014number, RaghuPKGS17, arora2018understanding}, can be bounded by the number of vertices of the polytopes associated with the neural network's tropical rational representation.
Lastly, a neural network with one hidden layer can be completely characterized by zonotopes, which serve as building blocks for deeper networks.

In Sections~\ref{sec:trop} and \ref{sec:hyper} we introduce  basic tropical algebra and tropical algebraic geometry of relevance to us. 
We state our assumptions  precisely in Section~\ref{sec:neural} and establish the connection between tropical geometry and multilayer neural networks in Section~\ref{sec:tropical-view}.
We analyze neural networks with tropical tools  in Section~\ref{sec:tgnn}, proving that a deeper neural network is exponentially more expressive than a shallow network --- though our objective is not so much to perform state-of-the-art analysis  but to demonstrate that tropical algebraic geometry can provide useful insights. All proofs are deferred to Section~\ref{sec:suppl-proofs} of the supplement.

\section{Tropical algebra}\label{sec:trop}

Roughly speaking, tropical algebraic geometry is an analogue of classical algebraic geometry over $\mathbb{C}$, the field of complex numbers, but where one replaces $\mathbb{C}$ by a  semifield\footnote{A semifield is a field sans the existence of additive inverses.}
called the tropical semiring, to be defined below.
We give a brief review of tropical algebra and introduce some relevant notations.
See \cite{itenberg2009tropical, maclagan2015introduction} for an in-depth treatment.

The most fundamental component of tropical algebraic geometry is the \emph{tropical semiring} $\mathbb{T} \coloneqq \big( \real \cup \{-\infty \}, \oplus, \odot \big)$.
The two operations $\oplus$ and $\odot$, called \emph{tropical addition} and \emph{tropical multiplication} respectively, are defined as follows.
\begin{definition}
For $x,y \in \real$, their \emph{tropical sum} is $x \oplus y \coloneqq \max \{x, y \}$;
their \emph{tropical product} is $ x \odot y \coloneqq x+y$;
the \emph{tropical quotient} of $x$ over $y$ is $x \oslash y \coloneqq x -y$.
\end{definition}
For any $x \in \real$, we have $-\infty \oplus x =  0 \odot x = x$ and $-\infty \odot x = -\infty$.
Thus $-\infty$ is the tropical additive identity and $0$ is the tropical multiplicative identity.
Furthermore, these operations satisfy the usual laws of arithmetic: associativity, commutativity, and distributivity.
The set $\real \cup \{-\infty \}$ is therefore a semiring under the operations $\oplus$ and  $\odot$. While it is not a ring (lacks additive inverse), one may nonetheless generalize many algebraic objects (e.g., matrices, polynomials, tensors, etc) and notions (e.g., rank, determinant, degree, etc) over the tropical semiring --- the study of these, in a nutshell, constitutes the subject of tropical algebra.

Let $\mathbb{N}=\{n\in \mathbb{Z} : n \geq 0 \}$. For an integer $a \in \mathbb{N}$, raising $x \in \real$ to the $a$th power is the same as multiplying $x$ to itself $a$ times. When standard multiplication is replaced by tropical multiplication, this gives us \emph{tropical power}:
\[
x^{\odot a} \coloneqq x \odot \dots \odot x = a \cdot x,
\]
where the last $\cdot$ denotes standard product of real numbers; it is extended to $ \real \cup \{-\infty \}$ by defining, for any $a \in \mathbb{N}$,
\[
-\infty^{\odot a} \coloneqq \begin{cases} -\infty & \text{if}\; a  > 0,\\ 0 & \text{if}\; a  = 0. \end{cases}
\]
A tropical semiring, while not a field, possesses one quality of a field: Every $x \in \mathbb{R}$ has a tropical multiplicative inverse given by its standard additive inverse, i.e., $x^{\odot (-1)} \coloneqq - x$. Though not reflected in its name, $\mathbb{T}$ is in fact a \emph{semifield}.

One may therefore also raise $x \in \mathbb{R}$ to a negative power $a \in \mathbb{Z}$ by raising its tropical multiplicative inverse $-x$ to the positive power $-a$, i.e.,
$x^{\odot a} = (-x)^{\odot (-a)}$.
As is the case in standard real arithmetic, the tropical additive inverse $-\infty$ does not have a tropical multiplicative inverse and $-\infty^{\odot a}$ is undefined for $a <0$. For notational simplicity, we will henceforth write $x^a$ instead of $x^{\odot a}$ for tropical power when there is no cause for confusion. Other algebraic rules of tropical power may be derived from definition; see Section~\ref{sec:suppl-trop-alg} in the supplement.

We are now in a position to define tropical polynomials and tropical rational functions. In the following, $x$ and $x_i$ will denote variables (i.e., indeterminates).
\begin{definition}\label{def:trop_mono}
A \emph{tropical monomial} in $d$ variables $x_1,\dots,x_d$ is an expression of the form
\begin{align*}
c \odot x_1^{a_1} \odot x_2^{a_2} \odot \dots \odot x_d^{a_d}
\end{align*}
where $c \in \real \cup \{-\infty\}$ and $a_1,\dots, a_d \in \mathbb{N}$. As a convenient shorthand, we will also write a tropical monomial in multiindex notation as $c x^\alpha$
where $\alpha=(a_1,\dots, a_d ) \in \mathbb{N}^d$ and $x =(x_1, \dots, x_d)$.
Note that $x^\alpha = 0 \odot x^\alpha$ as $0$ is the tropical multiplicative identity.
\end{definition}

\begin{definition}\label{def:trop_poly}
Following notations above, a \emph{tropical polynomial} $f(x) = f(x_1, \dots, x_d)$ is a finite tropical sum of tropical monomials
\[
f(x)=c_1 x^{\alpha_1} \oplus \dots \oplus c_r x^{\alpha_r},
\]
where $\alpha_i=(a_{i1}, \dots, a_{id}) \in \mathbb{N}^d$ and $c_i \in  \real \cup \{-\infty\}$, $i=1,\dots,r$.
We will assume that a monomial of a given multiindex appears at most once in the sum,  i.e., $\alpha_{i} \neq \alpha_{j}$ for any $i \neq j$.
\end{definition}

\begin{definition}\label{def:trop_rat}
Following notations above, a \emph{tropical rational function} is a standard difference, or, equivalently, a tropical quotient of two tropical polynomials $f(x)$ and $g(x)$:
\[
f(x) - g(x) = f(x) \oslash g(x).
\]
We will denote a tropical rational function by  $ f\oslash g $, where $f$ and $g$ are understood to be tropical polynomial functions.
\end{definition}
It is routine to verify that the set of tropical polynomials $\mathbb{T}[x_1,\dots,x_d]$ forms a  semiring under the standard extension of $\oplus$ and $\odot$ to tropical polynomials,  and likewise the set of tropical rational functions $\mathbb{T}(x_1,\dots,x_d)$ forms  a semifield. We regard a tropical polynomial $f = f \oslash 0$ as a special case
of a tropical rational function and thus $\mathbb{T}[x_1,\dots,x_d] \subseteq \mathbb{T}(x_1,\dots,x_d)$. Henceforth any result stated for a tropical rational function would implicitly also hold for a tropical polynomial.

A $d$-variate tropical polynomial $f(x)$ defines a function $f : \mathbb{R}^d \to \mathbb{R}$ that is a \emph{convex function} in the usual sense as taking max and sum of convex functions preserve convexity \cite{Boyd}. As such, a tropical rational function  $f \oslash g :  \mathbb{R}^d \to \mathbb{R} $ is a \emph{DC function} or \emph{difference-convex function} \cite{hartman1959functions, dca}.

We will need a notion of vector-valued tropical polynomials and tropical rational functions.
\begin{definition}\label{def:tropmorph}
$F : \mathbb{R}^d \to \mathbb{R}^p$, $x = (x_1,\dots,x_d)  \mapsto (f_1(x), \dots, f_p (x))$, is called a \emph{tropical polynomial map} if each $f_i : \mathbb{R}^d \to \mathbb{R}$ is a tropical polynomial, $i=1,\dots,p$, and a \emph{tropical rational map} if $f_1,\dots,f_p$ are tropical rational functions.  We will denote the set of tropical polynomial maps by $\PL(d, p)$ and the set of tropical rational maps by $\RL(d,p)$. So $\PL(d,1) = \mathbb{T}[x_1,\dots,x_d]$ and $\RL(d,1) = \mathbb{T}(x_1,\dots,x_d)$.
\end{definition}

\section{Tropical hypersurfaces}\label{sec:hyper}

There are tropical analogues of many notions in classical algebraic geometry  \cite{itenberg2009tropical, maclagan2015introduction}, among which are \emph{tropical hypersurfaces},  tropical analogues of algebraic curves in classical algebraic geometry.
Tropical hypersurfaces are a principal object of interest in tropical geometry and will prove very useful in our approach towards neural networks. Intuitively, the tropical hypersurface of a tropical polynomial $f$ is
the set of points $x$ where $f$ is not linear at $x$.
\begin{definition}
\label{def:trophype}
The \emph{tropical hypersurface} of  a tropical polynomial $f(x)= c_1 x^{\alpha_1} \oplus \dots \oplus c_r x^{\alpha_r}$ is
\begin{multline*}
\mathcal{T}(f) \coloneqq \big\{ x \in \real^d : 
c_i x^{\alpha_i}= c_j x^{\alpha_j}=f(x) \; \\
\text{for some} \;  \alpha_i \neq \alpha_j \big\}.
\end{multline*}
i.e., the set of points $x$ at which the value of $f$ at $x$ is attained by two or more monomials in $f$.
\end{definition}
A tropical hypersurface divides the domain of $f$ into convex cells on each of which $f$ is linear.
These cells are convex polyhedrons, i.e., defined by linear inequalities with integer coefficients: $\{x \in \mathbb{R}^d : Ax \le b \}$ for $A \in \mathbb{Z}^{m \times d}$ and $b\in \mathbb{R}^m$.
For example,  the cell where a tropical monomial $c_j x^{\alpha_j}$ attains its maximum is $\{ x \in \real^d : c_j +  \alpha_j^\tp x  \geq c_i +  \alpha_i^\tp x\; \text{for all} \; i \neq j  \}$.
Tropical hypersurfaces of polynomials in two variables (i.e., in $\real^2$) are called \emph{tropical curves}.

Just like standard multivariate polynomials, every tropical polynomial comes with an associated \emph{Newton polygon}.
\begin{definition}
The \emph{Newton polygon} of a tropical polynomial $f(x)=c_1x^{\alpha_1}\oplus \dots \oplus c_rx^{\alpha_r}$ is the convex hull of $\alpha_1,\dots,\alpha_r \in \mathbb{N}^d$, regarded as points in $\mathbb{R}^d$,
\begin{align*}
\Delta (f) \coloneqq \operatorname{Conv} \bigl\{ \alpha_i  \in \mathbb{R}^d : c_i \neq -\infty , \, i=1,\dots,r \bigr\}.
\end{align*}
\end{definition}

\begin{figure}
\centering
	\begin{tikzpicture}[thick, scale=0.3]
	
	\def\ra{4} 
	\def\scale{4} 
	\def\a{\scale * 1}
	
	%
	\begin{scope}
	\begin{scope}[shift={(0,0)}]
	\draw[blue]  (-\a, 0) -- (\a, 0);
	\draw[blue]  (\a, 0) -- (\a, -\a);
	\draw[blue]  (\a, 0) -- (\a * 2, \a);
	\end{scope}
	
	%
	\begin{scope}[shift={(-\a,0)}]
	\draw[blue] (0, \a) -- (\a, \a);
	\draw[blue]  (\a, \a) -- (\a, -\a);
	\draw[blue]  (\a, \a) -- (\a * 2, 2 * \a);
	\end{scope}
	\end{scope}	
	\end{tikzpicture}
	~~
	\begin{tikzpicture}[thick, scale=0.3]
	
	%
	\def\arrA{latex-}
	\def\arrB{-latex}
	
	\def\ra{4.5} 
	\def\scale{4} 
	\def\a{\scale * 1}
	
	%
	\begin{scope}
	\begin{scope}[shift={(\a,0)}]
	\draw[dashed, color =blue, line width=0.4mm, \arrA]  (-\a, 0) -- (\a, 0);
	\draw[dashed, color =blue, line width=0.4mm, \arrB]  (\a, 0) -- (\a, -\a);
	\draw(\a, 0)[fill = blue, blue ]  circle[color = blue, radius = \ra pt];
	\draw[dashed, color =blue, line width=0.4mm,  \arrB]  (\a, 0) -- (\a * 2, \a);
	\end{scope}
	
	\begin{scope}[shift={(0,0)}]
	\draw[dashed,  color =blue, line width=0.4mm, \arrA]  (0, \a) -- (\a, \a);
	\draw[dashed,  color =blue, line width=0.4mm, \arrB]  (\a, \a) -- (\a, -\a);
	\draw(\a, \a) [fill = blue, blue ]   circle[radius = \ra pt];
	\draw[dashed,  color =blue, line width=0.4mm, \arrB]  (\a, \a) -- (\a * 2, 2 * \a);
	\end{scope}
	\end{scope}
	
	\def\ra{3.5} 
	
	%
	\begin{scope}[shift={(2.5,-2)}]
	\def\scaleb{9}
	\def\b{\scaleb * 1}
	\fill (0, 0)  circle[radius = \ra pt];
	\draw (0, 0)  -- (\b, 0);
	\fill (\b, 0) circle[radius = \ra pt];
	\draw (\b, 0) -- (0, \b);
	\fill (0, \b) circle[radius = \ra pt];
	\draw (0, \b) -- (0, 0);
	
	\draw (0, 0.5 * \b)  -- (0.5 * \b,  0.5 * \b);
	\fill (0.5 * \b,  0.5 * \b) circle[radius = \ra pt];
	\draw (0.5 * \b,  0.5 * \b)  -- (0.5 * \b, 0);
	\fill (0, 0.5 * \b) circle[radius = \ra pt];
	\fill (0.5 * \b, 0) circle[radius = \ra pt];
	\end{scope}
	
	\end{tikzpicture}	
\caption{$1 \odot x_1^2  \oplus 1 \odot x_2^2 \oplus 2 \odot x_1 x_2 \oplus 2\odot x_1 \oplus 2\odot x_2 \oplus 2$. Left: Tropical curve.  Right: Dual subdivision of Newton polygon and tropical curve.
\label{fig:tropical-curve-eg2}}
\end{figure}
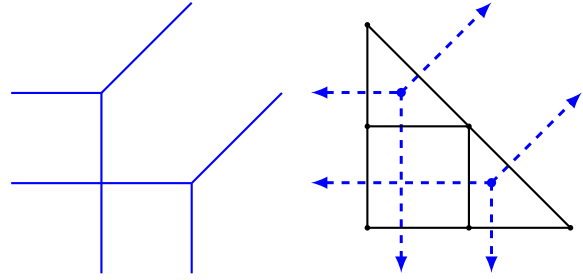

A tropical polynomial $f$ determines a dual subdivision of $\Delta (f)$, constructed as follows.
First, lift each $\alpha_i$ from $\real^d$ into $\real^{d+1}$ by appending $c_i$ as the last coordinate. Denote the convex hull of the lifted $\alpha_1,\dots,\alpha_r$ as
\begin{equation}\label{eq:polytope-F}
\cP(f)\coloneqq \operatorname{Conv}\{ (\alpha_i, c_i) \in \real^{d} \times \real : i=1,\dots,r\} .
\end{equation}
Next let $\UF\bigl(\cP(f)\bigr)$ denote the collection of upper faces in $\cP(f)$ and
$\pi : \real^{d} \times \real \rightarrow \real^{d}$ be the projection that drops the last coordinate.
The dual subdivision determined by  $f$ is then
\begin{align*}
\delta(f) \coloneqq \bigl\{ \pi(p) \subset \real^d : p \in \UF\bigl(\cP(f)\bigr) \bigr\}.
\end{align*}
$\delta( f )$ forms a polyhedral complex with support $\Delta(f)$.
By \citep[Proposition~3.1.6]{maclagan2015introduction}, the tropical hypersurface $\mathcal{T}(f)$ is the
$(d-1)$-skeleton of the polyhedral complex dual to $\delta(f)$. 
This means that each vertex in $\delta(f)$ corresponds to one ``cell'' in $\real^d$ where the function $f$ is linear.
Thus, the number of vertices in $\mathcal{P}(f)$ provides an upper bound on the number of linear regions of $f$.

Figure~\ref{fig:tropical-curve-eg2} shows the Newton polygon and dual subdivision for the tropical polynomial
$f(x_1, x_2) = 1 \odot x_1^2  \oplus 1 \odot x_2^2 \oplus 2 \odot x_1 x_2 \oplus 2\odot x_1 \oplus 2\odot x_2 \oplus 2$. 
Figure~\ref{fig:polytope-subdivision} shows how we may find the dual subdivision for this tropical polynomial by following the aforementioned procedures; with step-by-step details given in Section~\ref{sec:suppl-tropical-eg}.

Tropical polynomials and tropical rational functions are clearly piecewise linear functions. As such a tropical rational map is a piecewise linear map and the notion of \emph{linear region} applies.
\begin{definition}\label{def:lr}
A \emph{linear region} of $F\in \RL(d,m)$ is a maximal connected subset of the domain on which $F$ is linear.
The number of linear regions of $F$ is denoted $\R(F)$.
\end{definition}
Note that a tropical \emph{polynomial} map $F \in \PL(d, m)$ has convex linear regions but a tropical \emph{rational} map $F \in \RL(d,n)$ generally has nonconvex linear regions. In Section~\ref{sec:bounds}, we will use $\R(F)$ as a  measure of complexity for an $F  \in \RL(d,n)$ given by a neural network.

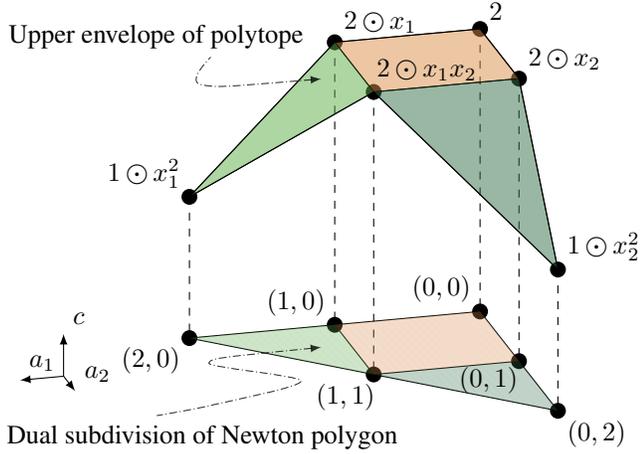
\begin{figure}
\centering
\begin{tikzpicture}[scale=2, tdplot_main_coords]

%
\def\arrA{latex-}
\def\arrB{-latex}

\coordinate (O) at (0,0,0);

%
\def\a{1}
\def\b{1}
\def\ra{1.5}
\coordinate (A) at (0, 0, 2 * \b);
\coordinate (B) at (1 * \a, 0, 2 * \b);
\coordinate (C) at (2 * \a, 0, 1 * \b);
\coordinate (D) at (1 * \a, 1 * \a, 2 * \b);
\coordinate (E) at (0, 1 * \a, 2 * \b);
\coordinate (F) at (0, 2 * \a, 1 * \b);

%
\coordinate (a) at (0, 0, 0);
\coordinate (b) at (1 * \a, 0, 0);
\coordinate (c) at (2 * \a, 0, 0);
\coordinate (d) at (1 * \a, 1 * \a, 0);
\coordinate (e) at (0, 1 * \a, 0);
\coordinate (f) at (0, 2 * \a, 0);

%
\begin{scope}[shift={(0,0.5, 0)}]
\def\scalaxis{0.3}
\coordinate (axis_o) at (3 * \a, 0 , 0);
\coordinate (axis_a1) at (3 * \a + \scalaxis, 0 , 0);
\coordinate (axis_a2) at (3 * \a, \scalaxis , 0);
\coordinate (axis_c) at (3 * \a, 0 ,  + \scalaxis);
\end{scope}

\draw[\arrB] (axis_o) -- (axis_c) node [above right] {$c$};
\draw[\arrB] (axis_o) -- (axis_a1) node [above right] {$a_1$};
\draw[\arrB] (axis_o) -- (axis_a2) node [above right] {$a_2$};

%
\fill (A) circle[radius = \ra pt] node [above right] {$2$};
\fill (B) circle[radius = \ra pt] node [above right] {$2\odot x_1$};
\fill (C) circle[radius = \ra pt] node [above left] {$1\odot x_1^2$};

%
\fill (D) circle[radius = \ra pt]; 
\fill (E) circle[radius = \ra pt] node [above right] {$2\odot x_2$};
\fill (F) circle[radius = \ra pt] node [above right] {$1\odot x_2^2$};

%
\fill (a) circle[radius = \ra pt]; 
\fill (b) circle[radius = \ra pt]; 
\fill (c) circle[radius = \ra pt]; 
\fill (d) circle[radius = \ra pt]; 
\fill (e) circle[radius = \ra pt]; 
\fill (f) circle[radius = \ra pt]; 

%
\draw[dashed] (A) -- (a);
\draw[dashed] (B) -- (b);
\draw[dashed] (C) -- (c);
\draw[dashed] (D) -- (d);
\draw[dashed] (E) -- (e);
\draw[dashed] (F) -- (f);

%
\draw[] (a) -- (b) -- (c) -- (d) -- (f) -- (e) -- (a);
\draw[] (b) -- (d);
\draw[] (d) -- (e);

\fill[cof,opacity=0.3](a) -- (b) -- (d) -- (e) -- cycle;
\fill[greeo,opacity=0.3](c) -- (b) -- (d) -- cycle;
\fill[greet,opacity=0.3](d) -- (e) -- (f) -- cycle;

%
\draw (A) -- (B) -- (D) -- (E) -- cycle;
\fill[cof,opacity=0.5](A) -- (B) -- (D) -- (E) -- cycle;

\draw (C) -- (B) -- (D) -- cycle;
\fill[greeo,opacity=0.5](C) -- (B) -- (D) -- cycle;

\draw (D) -- (E) -- (F) -- cycle;
\fill[greet,opacity=0.5](D) -- (E) -- (F) -- cycle;

%
\draw (C) -- (D) -- (F);

\draw (C) -- (B) -- (A);

%
\tikzstyle{densely dashdotted}=      [dash pattern=on 3pt off 1pt on \the\pgflinewidth off 1pt]
\fill [step=0.1cm, pattern color=blue, pattern = north west lines, opacity=0.15](a) -- (b) -- (c) -- (d) -- (f) -- (e) -- cycle;

\draw[opacity=0.8, densely dashdotted] [\arrA] plot  [smooth] coordinates {(1.2 * \a, 0.4 * \a, 0) (2  * \a, 0.5  * \a, 0)  (1.5  * \a, 1  * \a, 0)  (2.2  * \a, 1.3  * \a, 0) (2.6  * \a, 1.4  * \a, 0)
};

\fill (2.2  * \a, 1.2  * \a, 0) node [below, xshift = -0.4ex, yshift = -1.3ex] {Dual subdivision of Newton polygon};

\draw (A) -- (E) -- (F);

\draw[opacity=0.8, densely dashdotted] [\arrA] plot  [smooth] coordinates {(1.2 * \a, 0.4 * \a, 0.95 * 2 * \a) (2 * \a, 0.3 * \a, 0.95 * 2 * \a)  (1.8 * \a, -0.2 * \a, 0.95 * 2 * \a) 
};
\fill (3.2 * \a, -0.4 * \a, 0.98 * 2 * \a) node [above right] {Upper envelope of polytope};

%

\fill (a) node [above left] {$(0, 0)$};
\fill (b) node [above left] {$(1, 0)$};
\fill (c) node [below left] {$(2, 0)$};
\fill (d) node [below left, xshift = 0.9ex] {$(1, 1)$};
\fill (e) node [below left, xshift = 0.7ex, yshift = 0.3ex] {$(0, 1)$};
\fill (f) node [below right] {$(0, 2)$};

\fill (D) node [above right, yshift = 0.2ex, xshift = -0.5ex] {$2\odot x_1x_2$};
\end{tikzpicture}
\caption{$1 \odot x_1^2  \oplus 1 \odot x_2^2 \oplus 2 \odot x_1 x_2 \oplus 2\odot x_1 \oplus 2\odot x_2 \oplus 2$.
The dual subdivision can be obtained by projecting the edges on the upper faces of the polytope.
\label{fig:polytope-subdivision}}
\end{figure}

\subsection{Transformations of tropical polynomials} \label{sec:transform-tropical-poly}

Our analysis of neural networks will require figuring out how the polytope $\mathcal{P}(f)$ transforms under tropical power, sum, and  product. The first is straightforward.
\begin{proposition}
	\label{prop:polytope-exp}
	Let $f$ be a tropical polynomial and let $a \in \mathbb{N}$. Then
	\[
	\cP(f^a) = a \cP(f).
	\]
\end{proposition}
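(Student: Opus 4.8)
The plan is to unfold the tropical power $f^{a} = f \odot \dots \odot f$ ($a$ factors) into a tropical sum of monomials and then compare lifted point sets. Write $f = c_1 x^{\alpha_1} \oplus \dots \oplus c_r x^{\alpha_r}$ and set $S \coloneqq \{(\alpha_i, c_i) \in \real^d \times \real : c_i \neq -\infty\}$, so that $\cP(f) = \operatorname{Conv}(S)$. Applying distributivity of $\odot$ over $\oplus$ repeatedly,
\[
f^{a} = \bigoplus_{(i_1,\dots,i_a) \in \{1,\dots,r\}^{a}} \bigl( c_{i_1} \odot \dots \odot c_{i_a} \bigr)\, x^{\,\alpha_{i_1} + \dots + \alpha_{i_a}} .
\]
Since $c_{i_1} \odot \dots \odot c_{i_a} = c_{i_1} + \dots + c_{i_a}$ while the exponents add, each monomial of $f^{a}$ is tagged by the lifted point $(\alpha_{i_1} + \dots + \alpha_{i_a},\, c_{i_1} + \dots + c_{i_a})$, i.e.\ by an element of the $a$-fold Minkowski sum $S + \dots + S$; monomials carrying a $-\infty$ coefficient contribute to neither side. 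So, modulo the bookkeeping point discussed below, $\cP(f^{a}) = \operatorname{Conv}(S + \dots + S)$.

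It then remains to invoke two elementary facts of convex geometry. First, $\operatorname{Conv}(A + B) = \operatorname{Conv}(A) + \operatorname{Conv}(B)$ for finite $A, B \subset \real^{d+1}$. Second, for a convex set $P$ and $a \in \mathbb{N}$ the $a$-fold Minkowski sum equals the dilate $aP$: the inclusion $aP \subseteq P + \dots + P$ is clear from $ap = p + \dots + p$, and $p_1 + \dots + p_a = a\bigl( \tfrac1a (p_1 + \dots + p_a) \bigr) \in aP$ by convexity gives the reverse. Using the first fact $a-1$ times and then the second with $P = \cP(f) = \operatorname{Conv}(S)$ yields
\[
\operatorname{Conv}(S + \dots + S) = \underbrace{\operatorname{Conv}(S) + \dots + \operatorname{Conv}(S)}_{a} = a\,\cP(f),
\]
which is the assertion.

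The step I expect to require the most care is the passage ``$\cP(f^{a}) = \operatorname{Conv}(S + \dots + S)$''. By Definition~\ref{def:trop_poly} the normal form of $f^{a}$ lists each multiindex $\beta$ once, with coefficient $c_\beta = \max\{ c_{i_1} + \dots + c_{i_a} : \alpha_{i_1} + \dots + \alpha_{i_a} = \beta \}$; forming this normal form discards the dominated lifted points $(\beta, c')$ with $c' < c_\beta$ from $S + \dots + S$. One must therefore check that discarding them does not shrink the convex hull, i.e.\ that each discarded point already lies in $\operatorname{Conv}$ of the retained points — the natural route is to use that $c_\beta$ is the largest coefficient among monomials of exponent $\beta$ together with the layered structure of the iterated Minkowski sum. (Alternatively, if $f^{a}$ is read as the un-collapsed distributive expansion, then $\cP(f^{a}) = \operatorname{Conv}(S + \dots + S)$ holds by definition and the two convex-geometry facts already finish the proof.) Everything else is routine.
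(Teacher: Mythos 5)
Your main line of argument --- unfold the product, identify the lifted points of $f^{a}$ with the $a$-fold Minkowski sum $S+\dots+S$, then apply $\operatorname{Conv}(A+B)=\operatorname{Conv}(A)+\operatorname{Conv}(B)$ and $P+\dots+P=aP$ for convex $P$ --- is essentially the paper's reduction (the paper gets there even more directly: since multiplication by $a\in\mathbb{N}$ commutes with $\max$, the tropical Freshman's dream gives $f^{\odot a}=\bigoplus_i c_i^{a}x^{a\alpha_i}$, whose lifted point set is exactly $aS$, so $\cP(f^a)=\operatorname{Conv}(aS)=a\operatorname{Conv}(S)$ in one step). Both convex-geometry facts you invoke are correct.

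The problem is precisely the step you flag, and in particular the route you propose for closing it: the claim that every discarded point of $S+\dots+S$ lies in the convex hull of the retained points is \emph{false} in general. Take $d=1$, $a=2$, $f=0\oplus(-10)\odot x\oplus 0\odot x^{2}$. Then $\cP(f)=\operatorname{Conv}\{(0,0),(1,-10),(2,0)\}$ and $a\cP(f)$ is the triangle with vertices $(0,0)$, $(2,-20)$, $(4,0)$. Collapsing the distributive expansion of $f\odot f$ to the normal form of Definition~\ref{def:trop_poly} assigns exponent $2$ the coefficient $\max\{0+0,\,(-10)+(-10)\}=0$, so the point $(2,-20)=(1,-10)+(1,-10)$ is discarded; the retained points are $(0,0),(1,-10),(2,0),(3,-10),(4,0)$, whose hull is a trapezoid that does not contain $(2,-20)$. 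Hence under the strict normal-form reading of $f^{a}$ the identity $\cP(f^{a})=a\cP(f)$ actually fails, so no bookkeeping argument can patch that branch --- the proposition must be read with your parenthetical convention (the un-collapsed expansion, equivalently the Freshman's-dream form), which is also what the paper implicitly does in Proposition~\ref{prop:polytope-ops} when it identifies the monomials of $f\odot g$ with $\cV(\cP(f))+\cV(\cP(g))$. Note the discrepancy lives only on the lower hull: since the retained coefficient is the maximum over each exponent, the upper faces of $\cP(f^a)$ --- and hence the dual subdivision and all linear-region counts the paper extracts from these polytopes --- are the same under either convention. So your primary argument, combined with your parenthetical reading, is a complete and correct proof; the fallback you sketch for the normal-form reading is a dead end.
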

$a \cP(f)=\{ ax : x \in \cP(f) \} \subseteq \real^{d+1}$ is a scaled version of $\cP(f)$ with the same shape but different volume.

To describe the effect of tropical sum and product, we need a few notions from convex geometry. The \emph{Minkowski sum}  of two sets  $P_1$ and $P_2$ in $\real^d$ is the set 
\[
P_1 + P_2 \coloneqq \big\{ x_1 + x_2 \in \real^d :  x_1 \in P_1, x_2 \in P_2 \big\};
\]
and for $\lambda_1, \lambda_2 \geq 0$, their \emph{weighted Minkowski sum} is
\[
\lambda_1 P_1 + \lambda_2 P_2\coloneqq \big\{ \lambda_1 x_1 + \lambda_2 x_2 \in \real^d :  x_1 \in P_1, x_2 \in P_2 \big\}.
\]
Weighted Minkowski sum is clearly commutative and associative and generalizes to more than two sets.
In particular, the Minkowski sum of line segments is called a \emph{zonotope}.

Let $\cV (P)$ denote the set of vertices of a polytope $P$. Clearly, the Minkowski sum of two polytopes is given by the convex hull of the Minkowski sum of their vertex sets, i.e.,
$P_1 + P_2 = \operatorname{Conv} \bigl( \cV(P_1)+\cV(P_2) \bigr)$.
With this observation, the following is immediate.
\begin{proposition}
	\label{prop:polytope-ops}
	Let $f, g \in \PL(d,1) = \mathbb{T}[x_1,\dots,x_d]$ be tropical polynomials. Then
	\begin{align*}
	\cP(f \odot g ) &= \cP(f) + \cP(g), \\
	\cP(f \oplus g) &= \operatorname{Conv} \bigl( \cV ( \cP(f) ) \cup \cV ( \cP(g) ) \bigr).
	\label{eq:polytope-ops}
	\end{align*}
\end{proposition}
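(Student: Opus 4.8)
The plan is to reduce both identities to two elementary facts from convex geometry: that for finite sets $A,B\subset\real^{d+1}$ one has $\operatorname{Conv}(A+B)=\operatorname{Conv}(A)+\operatorname{Conv}(B)$, and that every polytope is the convex hull of its own vertex set. Neither requires anything about tropical algebra; the only genuine bookkeeping is to unwind Definition~\ref{def:trop_poly} and read off exactly which finite set of lifted exponent vectors $(\alpha,c)\in\real^d\times\real$ generates $\cP(f\odot g)$ and $\cP(f\oplus g)$, so that \eqref{eq:polytope-F} can be applied.

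\textbf{The product.} Write $f=c_1x^{\alpha_1}\oplus\dots\oplus c_r x^{\alpha_r}$ and $g=d_1x^{\beta_1}\oplus\dots\oplus d_s x^{\beta_s}$, and set $S_f=\{(\alpha_i,c_i):i=1,\dots,r\}$, $S_g=\{(\beta_j,d_j):j=1,\dots,s\}$, so $\cP(f)=\operatorname{Conv}(S_f)$ and $\cP(g)=\operatorname{Conv}(S_g)$. Tropical distributivity gives $f\odot g=\bigoplus_{i,j}(c_i+d_j)\,x^{\alpha_i+\beta_j}$, whose lifted exponent set is $S_f+S_g$ (when two pairs $(i,j)$ produce the same multiindex one keeps the larger coefficient, which discards a point without changing what is needed below). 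Hence $\cP(f\odot g)=\operatorname{Conv}(S_f+S_g)$, and it remains to show $\operatorname{Conv}(S_f+S_g)=\cP(f)+\cP(g)$. For ``$\subseteq$'': $S_f+S_g\subseteq\cP(f)+\cP(g)$ and the right side is convex. For ``$\supseteq$'': $\cV(\cP(f))\subseteq S_f$ and $\cV(\cP(g))\subseteq S_g$, so by the vertex description of Minkowski sums recalled just before the proposition, $\cP(f)+\cP(g)=\operatorname{Conv}\bigl(\cV(\cP(f))+\cV(\cP(g))\bigr)\subseteq\operatorname{Conv}(S_f+S_g)$.

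\textbf{The sum.} The tropical polynomial $f\oplus g$ is obtained by listing the monomials of $f$ together with those of $g$ (merging any repeated multiindex via $\max$), so its lifted exponent set is $S_f\cup S_g$ and $\cP(f\oplus g)=\operatorname{Conv}(S_f\cup S_g)$. Since $S_f\subseteq\cP(f)$ and $S_g\subseteq\cP(g)$ we get $\cP(f\oplus g)\subseteq\operatorname{Conv}\bigl(\cP(f)\cup\cP(g)\bigr)$; conversely $\cV(\cP(f))\subseteq S_f$ and $\cV(\cP(g))\subseteq S_g$ give $\operatorname{Conv}\bigl(\cV(\cP(f))\cup\cV(\cP(g))\bigr)\subseteq\cP(f\oplus g)$. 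Finally $\operatorname{Conv}\bigl(\cP(f)\cup\cP(g)\bigr)=\operatorname{Conv}\bigl(\cV(\cP(f))\cup\cV(\cP(g))\bigr)$ because each polytope equals the convex hull of its vertices, and chaining the three relations yields the claimed identity.

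\textbf{Main obstacle.} There is no deep difficulty here — the result is essentially a repackaging of $\operatorname{Conv}(A+B)=\operatorname{Conv}(A)+\operatorname{Conv}(B)$ and $P=\operatorname{Conv}(\cV(P))$, which is why the text calls it immediate. The one point that needs care is the merging of monomials sharing a multiindex but carrying different coefficients: in principle this deletes lower points of the generating set, so one should either phrase the equalities in terms of the natural distributed/concatenated expansions of $f\odot g$ and $f\oplus g$, or check explicitly that the deleted points never lie on $\UF(\cP(\cdot))$ and hence are irrelevant to the way $\cP$ is used subsequently. Either way the verification is routine.
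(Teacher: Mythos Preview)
Your proof is correct and is precisely the natural unpacking of what the paper calls ``immediate'' from the observation $P_1+P_2=\operatorname{Conv}\bigl(\cV(P_1)+\cV(P_2)\bigr)$ stated just before the proposition; the paper offers no further argument. Your flagged caveat about merging repeated multiindices is well-taken --- in that degenerate situation the stated identities can fail as literal equalities of polytopes (only the upper faces agree), a point the paper simply glosses over.
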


We reproduce below part of \citep[Theorem 2.1.10]{gritzmann1993minkowski} and derive a corollary for bounding the number of verticies on the upper faces of a zonotope.
\begin{theorem}[Gritzmann--Sturmfels]
	\label{thm:minkowski-face-bound}
Let $P_1, \dots, P_k$ be polytopes in $\real^d$ and let $m$ denote the total number of nonparallel edges of $P_1, \dots, P_k$. 
Then the number of vertices of $P_1 + \dots + P_k$ does not exceed
	\begin{align*}
	2 \sum_{j=0}^{d-1} \binom{m-1}{j}.
	\end{align*}
The upper bound is attained if all $P_i$'s are zonotopes and all their generating line segments are in general positions.
\end{theorem}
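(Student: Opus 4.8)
The plan is to pass from the Minkowski sum to a central hyperplane arrangement. Set $P \coloneqq P_1 + \dots + P_k$, and assume without loss of generality that each $P_i$, hence $P$, is full-dimensional. The starting point is the standard fact that the normal fan of $P$ is the common refinement of the normal fans of $P_1,\dots,P_k$, and that its maximal cones are in bijection with the vertices of $P$: a generic linear functional $c \in \mathbb{R}^d$ selects a vertex $v = v_1 + \dots + v_k$ of $P$ with $v_i$ the unique $c$-maximizer on $P_i$, and the set of functionals selecting the same $v$ is the relatively open intersection of the normal cones at the $v_i$, which is a full-dimensional cone. So it suffices to bound the number of maximal cones of this common refinement.

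Next I would locate the walls (codimension-one faces) of the refinement. Every such wall is the normal cone of an edge $e$ of $P$, and every edge of $P$ is a Minkowski sum of faces of the $P_i$ in which at least one summand is an edge and the remaining summands are vertices or edges parallel to it (two non-parallel edge-summands would force the sum to be at least two-dimensional); in particular $e$ is parallel to an edge of some $P_i$, so its normal cone lies in the linear hyperplane $H_e$ through the origin orthogonal to $e$. Parallel edges --- even coming from different summands --- give the same hyperplane, so by hypothesis there are at most $m$ distinct hyperplanes $H_1, \dots, H_s$, $s \le m$, and every wall of the refinement is contained in $H_1 \cup \dots \cup H_s$. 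Consequently each open region of the central arrangement $\{H_1, \dots, H_s\}$ is connected and disjoint from the walls, hence lies in the interior of exactly one maximal cone, and conversely each maximal cone contains at least one such region; therefore the number of vertices of $P$ is at most the number of regions of a central arrangement of at most $m$ hyperplanes in $\mathbb{R}^d$.

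It then remains to show that a central arrangement of $m$ hyperplanes in $\mathbb{R}^d$ has at most $2\sum_{j=0}^{d-1}\binom{m-1}{j}$ regions. I would prove this by deletion--restriction: writing $R(m,d)$ for the maximum, adding one more hyperplane $H$ only splits the regions it meets, and their number equals the number of regions cut on $H \cong \mathbb{R}^{d-1}$ by the traces of the other $m-1$ hyperplanes, which is at most $R(m-1,d-1)$; hence $R(m,d) \le R(m-1,d) + R(m-1,d-1)$ with base cases $R(0,d) = 1$ and $R(m,1) = 2$ for $m \ge 1$. An induction on $m+d$ yields the closed form, and every inequality becomes an equality precisely when the hyperplanes are in general position.

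For the sharpness claim, if each $P_i$ is a zonotope and the union of all their generating segments is in general position, then $P = P_1 + \dots + P_k$ is again a zonotope with exactly these $m$ generators, its normal fan is exactly the central arrangement of the $m$ hyperplanes orthogonal to the generators, and by the general-position equality case above this arrangement has exactly $2\sum_{j=0}^{d-1}\binom{m-1}{j}$ regions, so $P$ has that many vertices. The main obstacle I anticipate is expository rather than conceptual: carefully justifying the vertex--maximal-cone bijection and the description of edges of a Minkowski sum (both standard in polytope theory but requiring a clean argument through normal-fan and face decompositions); once that reduction is in place, the hyperplane-arrangement count is routine.
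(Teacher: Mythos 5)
The paper does not actually prove this statement: it is quoted verbatim (as the authors say) from Theorem~2.1.10 of Gritzmann--Sturmfels, so there is no internal proof to compare against. Your argument is a correct, self-contained proof, and it follows what is essentially the original route of Gritzmann and Sturmfels: identify vertices of $P_1+\dots+P_k$ with maximal cones of the common refinement of the normal fans, observe that every wall of that refinement is the normal cone of an edge of the sum, that every edge of the sum is (by the face-decomposition of Minkowski sums) parallel to an edge of some summand, hence that all walls lie in a central arrangement of at most $m$ hyperplanes, and then bound the number of regions of such an arrangement by deletion--restriction; the zonotope case gives equality because the normal fan \emph{is} the arrangement. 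Two small points to tidy up. First, the reduction to full-dimensional $P_i$ is not literally ``without loss of generality'' (a line segment cannot be made full-dimensional), but it is also unnecessary: normal cones of vertices are full-dimensional for any polytope, so the counting argument goes through verbatim without that assumption. Second, your base case $R(0,d)=1$ does not match the closed form $2\sum_{j=0}^{d-1}\binom{m-1}{j}$ at $m=0$; the induction should be anchored at $m=1$ (where one hyperplane gives $2$ regions, matching the formula) and at $d=1$, after which the Pascal identity $\binom{m-2}{j}+\binom{m-2}{j-1}=\binom{m-1}{j}$ closes the recursion exactly as you indicate. Neither point affects the validity of the proof.
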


\begin{corollary}
\label{cor:num-vert-on-uf}
Let $P \subseteq \real^{d+1}$ be a zonotope generated by $m$ line segments $P_1, \dots, P_m$.
Let $\pi : \real^d \times \real \to \real^d$ be the projection.
Suppose $P$ satisfies:
\begin{enumerate}[\upshape (i), topsep=0ex, itemsep=0ex]
\item\label{it:seg} the generating line segments are in general positions; 
\item\label{it:ver} the set of projected vertices $\{ \pi(v) : v \in \mathcal{V} (P) \} \subseteq \real^d$ are in general positions.
\end{enumerate}
Then $P$ has
\[
\sum_{j=0}^{d} \binom{m}{j}
\]
vertices on its upper faces.
If either \ref{it:seg} or \ref{it:ver} is violated, then this becomes an upper bound.
\end{corollary}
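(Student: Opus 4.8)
The plan is to compute the normal fan of $P$, read off which of its vertices lie on upper faces, and then reduce the count to the vertex count of a slightly larger zonotope, to which Theorem~\ref{thm:minkowski-face-bound} applies directly. Since upper faces and normal fans are translation invariant, write $P = \sum_{i=1}^{m}[0,u_i]$, where $u_i \in \real^{d+1}$ is the direction vector of the segment $P_i$; we may assume no $u_i$ is a multiple of $e_{d+1}$, since such a vertical generator makes hypothesis~\ref{it:ver} fail (an edge of $P$ in that direction projects two distinct vertices of $P$ to the same point), so that only the upper bound is then at issue. Recall the standard fact that the normal fan of the zonotope $\sum[0,u_i]$ is the complete fan cut out by the central hyperplane arrangement $\mathcal{A} \coloneqq \{u_i^{\perp} : i = 1,\dots,m\} \subseteq (\real^{d+1})^{*}$; in particular the vertices of $P$ are in bijection with the chambers (the full-dimensional regions) of $\mathcal{A}$, the chamber $C$ corresponding to the vertex whose normal cone is $\overline{C}$.

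Next I would pin down which vertices lie on upper faces. A face of $P$ is an upper face exactly when its outer normal cone contains a vector with positive last coordinate, so the vertex attached to a chamber $C$ lies on an upper face if and only if $\overline{C}$, equivalently (since $C$ is an open cone dense in $\overline{C}$) if and only if $C$ itself, meets the open halfspace $H^{+} \coloneqq \{\ell : \ell_{d+1} > 0\}$. Classify the chambers of $\mathcal{A}$ by their position relative to $H_0 \coloneqq \{\ell : \ell_{d+1} = 0\}$: let $U$ be the number contained in $H^{+}$, $L$ the number contained in $\{\ell : \ell_{d+1} < 0\}$, and $M$ the number meeting $H_0$. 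Central symmetry of $\mathcal{A}$ forces $U = L$, and a chamber meets $H^{+}$ if and only if it is contained in $H^{+}$ or meets $H_0$ (a chamber touching both open halfspaces is connected, hence meets $H_0$). Therefore the number of vertices of $P$ lying on upper faces is $U + M$.

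Now I would introduce $P' \coloneqq P + [0, e_{d+1}]$, a zonotope generated by the $m+1$ segments $P_1,\dots,P_m,[0,e_{d+1}]$, whose central arrangement is $\mathcal{A}' = \mathcal{A} \cup \{H_0\}$. Refining $\mathcal{A}$ by the single extra hyperplane $H_0$ leaves untouched the $U + L = 2U$ chambers disjoint from $H_0$ and splits into two each of the $M$ chambers meeting $H_0$, so the number of vertices $\lvert \cV(P') \rvert$ of $P'$, being the number of chambers of $\mathcal{A}'$, equals $2U + 2M = 2(U + M)$ --- exactly twice the number of vertices of $P$ on upper faces. Applying Theorem~\ref{thm:minkowski-face-bound} to $P'$, whose $m+1$ generating segments are pairwise nonparallel (by hypothesis~\ref{it:seg} together with the absence of vertical generators), we get $\lvert \cV(P') \rvert \le 2\sum_{j=0}^{d}\binom{m}{j}$, with equality exactly when those $m+1$ directions are in general position. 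Halving shows that the number of vertices of $P$ on upper faces is at most $\sum_{j=0}^{d}\binom{m}{j}$ in all cases, with equality under that genericity condition.

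It remains to match the genericity condition with hypotheses~\ref{it:seg} and \ref{it:ver}: \ref{it:seg} is precisely general position among $u_1,\dots,u_m$ in $\real^{d+1}$, and, granting it, adjoining $e_{d+1}$ keeps the whole generator set in general position precisely when the projections $\pi(u_i)$ --- which are the edge directions of the zonotope $\pi(P) \subseteq \real^d$ --- are themselves in general position; this is what hypothesis~\ref{it:ver} captures, since a parallel pair among the $\pi(u_i)$ would send a parallelogram $2$-face of $P$ to collinear vertices of $\pi(P)$ and thus violate \ref{it:ver}. If either hypothesis fails, the augmented generator set degenerates and only the Gritzmann--Sturmfels inequality survives. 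I expect this last step --- translating \ref{it:seg} and \ref{it:ver} precisely into general position of $\{u_1,\dots,u_m,e_{d+1}\}$, hence into the equality case of Theorem~\ref{thm:minkowski-face-bound} --- to be the main obstacle; the structural heart of the proof, the identity $\lvert \cV(P + [0,e_{d+1}]) \rvert = 2 \cdot \#\{\text{vertices of } P \text{ on upper faces}\}$, is the clean part.
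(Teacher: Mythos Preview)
Your argument is correct and the reduction to Theorem~\ref{thm:minkowski-face-bound} via the augmented zonotope $P' = P + [0,e_{d+1}]$ is clean: the chamber count establishing $\lvert \cV(P') \rvert = 2(U+M)$ is airtight, and applying Theorem~\ref{thm:minkowski-face-bound} to $P' \subseteq \real^{d+1}$ with $m+1$ generators gives precisely $2\sum_{j=0}^{d}\binom{m}{j}$. The paper's own proof sits in the supplement, which is not reproduced in the source provided, so a line-by-line comparison is impossible; but since the result is billed as a corollary of Theorem~\ref{thm:minkowski-face-bound}, the intended route is almost certainly a reduction of this same kind. The only obvious alternative packaging---observing that upper vertices of $P$ biject with chambers of the \emph{affine} arrangement $\{u_i^{\perp} \cap \{\ell_{d+1}=1\}\}$ in $\real^d$ and quoting the Zaslavsky count $\sum_{j=0}^{d}\binom{m}{j}$---is equivalent to yours, your adjunction of $e_{d+1}$ being exactly the ``coning'' that turns that affine arrangement back into a central one.

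One genuine loose end in the equality case. The fact you need, and should state explicitly, is that $\{u_{i_1},\dots,u_{i_d},e_{d+1}\}$ is linearly independent in $\real^{d+1}$ if and only if $\{\pi(u_{i_1}),\dots,\pi(u_{i_d})\}$ is linearly independent in $\real^d$ (cofactor expansion along the $e_{d+1}$ column). Hence general position of the augmented generator set is exactly \ref{it:seg} together with ``every $d$ of the $\pi(u_i)$ are independent in $\real^d$.'' Your final paragraph argues only about \emph{parallel pairs} among the $\pi(u_i)$, which detects rank-one degenerations but not, say, three projected directions becoming coplanar when $d=3$; so the sketch does not yet prove the equivalence you claim. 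Whether the condition ``every $d$ of the $\pi(u_i)$ are independent'' coincides with \ref{it:ver} as literally written (general position of projected \emph{vertices}) is itself a point the paper leaves informal; you are right to flag it, but you should at least carry the linear-algebra half through rather than stop at the $d=2$ heuristic.
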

As we mentioned, linear regions of a tropical polynomial $f$ correspond to vertices on $\UF\bigl(\cP(f)\bigr)$ and the corollary will be useful for bounding the number of linear regions.

\section{Neural networks}\label{sec:neural}
While we expect our readership to be familiar with feedforward neural networks, we will nevertheless use this short section to define them,
primarily for the purpose of fixing notations and specifying the assumptions that we retain throughout this article. We restrict our attention to fully connected feedforward neural networks.

Viewed abstractly, an $L$-layer feedforward neural network is a map $\nu : \mathbb{R}^d \to \mathbb{R}^p$
given by a composition of functions
\[
\nu = \sigma^{(L)} \circ \rho^{(L)} \circ \sigma^{(L - 1)} \circ \rho^{(L-1)} \dots \circ \sigma^{(1)} \circ \rho^{(1)}.
\]
The \emph{preactivation} functions $\rho^{(1)},\dots, \rho^{(L)}$ are affine transformations to be determined and the \emph{activation} functions $\sigma^{(1)},\dots,\sigma^{(L)}$ are chosen and fixed in advanced. 

We denote the \emph{width}, i.e., the number of nodes, of the $l$th layer by $n_l$, $l=1, \cdots, L-1$. We set $n_0 \coloneqq d$ and $n_L \coloneqq p$, respectively the dimensions of the input and output of the network.
The output from the $l$th layer will be denoted by
\[
\nu^{(l)} \coloneqq
\sigma^{(l)} \circ \rho^{(l)} \circ \sigma^{(l- 1)} \circ \rho^{(l-1)} \dots \circ \sigma^{(1)} \circ \rho^{(1)},
\]
i.e., it is a map $\nu^{(l)} : \real^d \to \real^{n_l}$.
For convenience, we assume $\nu^{(0)}(x) \coloneqq x$.

The affine function $\rho^{(l)}:\real^{n_{l-1}} \to \real^{n_l}$ is given by a \emph{weight} matrix $A^{(l)} \in \mathbb{Z}^{{n_l} \times n_{l-1}}$ and a \emph{bias} vector $b^{(l)} \in \real^{n_{l}}$:
\[
\rho^{(l)}(\nu^{(l-1)})\coloneqq  A^{(l)}\nu^{(l-1)} +b^{(l)}.
\]
The $(i,j)$th coordinate of  $A^{(l)}$ will be denoted $a_{ij}^{(l)}$ and
the $i$th coordinate of $b^{(l)}$ by $b_{i}^{(l)}$.
Collectively they form the \emph{parameters} of the $l$th layer.

For a vector input $x\in \mathbb{R}^{n_l}$, $\sigma^{(l)}(x)$ is understood to be in coordinatewise sense; so $\sigma : \mathbb{R}^{n_l} \to \mathbb{R}^{n_l}$. 
We assume the final output of a neural network $\nu(x)$ is fed into a \emph{score function}  $s : \mathbb{R}^{p} \to \mathbb{R}^m$  that is application specific. 
When used as an $m$-category classifier, $s$ may be chosen, for example, to be a soft-max or sigmoidal function.
The score function is quite often regarded as the last layer of a neural network but this is purely a matter of convenience and we will not assume this.
We will make the following mild assumptions on the architecture of our feedforward neural networks and explain next why they are indeed mild:
\begin{enumerate}[\upshape (a), topsep=0ex, itemsep=0ex]
	\item\label{ass1} the weight matrices $A^{(1)},\dots,A^{(L)}$  are integer-valued;
	\item\label{ass2} the bias vectors $b^{(1)},\dots,b^{(L)}$ are real-valued;
    \item\label{ass3} the activation functions $\sigma^{(1)},\dots,\sigma^{(L)}$ take the form 
    \[
    \sigma^{(l)}(x) \coloneqq  \max\{x, t^{(l)}\},
    \]
where $t^{(l)} \in (\real \cup \{-\infty \})^{n_l}$ is called a \emph{threshold} vector.
\end{enumerate}
Henceforth all neural networks in our subsequent discussions will be assumed to satisfy \ref{ass1}--\ref{ass3}.

\ref{ass2} is completely general but there is also no loss of generality in \ref{ass1},
i.e., in   restricting the weights  $A^{(1)},\dots,A^{(L)}$ from real matrices to integer matrices, as:
\begin{itemize}[topsep = 0ex, itemsep=0ex]
\item real weights can be approximated arbitrarily closely by rational weights;
\item one may then `clear denominators' in these rational weights by multiplying them by the least common multiple of their denominators to obtain integer weights;
\item keeping in mind that scaling all weights and biases by the same positive constant has no bearing on the workings of a neural network.
\end{itemize}

The activation function in  \ref{ass3} includes both ReLU activation ($t^{(l)} = 0$) and identity map ($t^{(l)} = -\infty$) as special cases. Aside from ReLU, our tropical framework will apply to piecewise linear activations such as leaky ReLU and absolute value, and with some extra effort, may be extended to max pooling, maxout nets, etc. But it does not, for example, apply to activations such as hyperbolic tangent and sigmoid.

In this work, we view an ReLU network as the simplest and most canonical model of a neural network, from 
which other variants that are more effective at specific tasks may be derived. Given that we seek general theoretical insights and not specific practical efficacy, it makes sense to limit ourselves to this simplest case.
Moreover, ReLU networks already embody some of the most important elements (and mysteries) common to a wider range of neural networks (e.g., universal  approximation, exponential expressiveness); they
work well in practice and are often the go-to choice for feedforward networks. We are also not alone in limiting our discussions to ReLU networks \cite{montufar2014number, arora2018understanding}.

\section{Tropical algebra of neural networks}
\label{sec:tropical-view}

We now describe our tropical formulation of a multilayer feedforward neural network satisfying \ref{ass1}--\ref{ass3}.

A multilayer feedforward neural network is generally nonconvex,
whereas a tropical polynomial is always convex. 
Since most nonconvex functions are a difference of two convex functions \cite{hartman1959functions}, a reasonable guess is that a feedforward neural network  is the difference of two tropical polynomials, i.e., a tropical rational function. This is indeed the case, as we will see from the following.

Consider the output from the first layer in neural network
\begin{align*}
\nu(x) &=\max \{A x+b, \, t \},
\end{align*}
where $A \in \mathbb{Z}^{p \times d}, b \in \mathbb{R}^{p}$, and $t \in (\real \cup \{-\infty\})^p$. We will decompose $A$ as a difference of two nonnegative integer-valued matrices, $A = A_+ - A_-$ with $A_+, A_- \in \mathbb{N}^{p \times d}$; 
e.g., in the standard way with entries
\[
a^{+}_{ij} \coloneqq \max \{ a_{ij}, 0\}, \qquad
a^{-}_{ij} \coloneqq \max \{-a_{ij}, 0\}
\]
respectively. 
Since
\[
\max \{ Ax+b , t \} = \max \{ A_+ x + b, \, A_- x + t\} - A_- x,
\]
we see that every coordinate of one-layer neural network is a difference of two tropical polynomials.
For networks with more layers, we apply this decomposition recursively to obtain the following result.
\begin{proposition}\label{prop:representation2}
Let $A \in \mathbb{Z}^{m \times n}$, $b \in \mathbb{R}^m$ be the parameters of the $(l+1)$th layer, and let  $t \in (\real \cup \{-\infty\})^m$ be the threshold vector in the $(l+1)$th layer.
If the nodes of the $l$th layer are given by tropical rational functions,
\[
\nu^{(l)}(x) = F^{(l)}(x) \oslash G^{(l)}(x) =  F^{(l)}(x) - G^{(l)}(x),
\]
i.e., each coordinate of $F^{(l)}$ and $G^{(l)}$ is a  tropical polynomial in $x$,
then the outputs of the preactivation and of the $(l+1)$th layer are given by tropical rational functions
\begin{align*}
\rho^{(l+1)}\circ \nu^{(l)}(x) &= H^{(l+1)}(x) -  G^{(l+1)}(x), \\
\nu^{(l+1)}(x) = \sigma \circ\rho^{(l+1)}\circ \nu^{(l)}(x) &=  F^{(l+1)}(x) - G^{(l+1)}(x)
\end{align*}
respectively, where
\begin{align*}
F^{(l+1)}(x) &= \max \bigl\{ H^{(l+1)}(x),\, G^{(l+1)}(x) + t \bigr\}, \\                    
G^{(l+1)}(x) &= A_+ G^{(l)}(x) + A_- F^{(l)}(x), \\
H^{(l+1)}(x) &= A_+ F^{(l)}(x)+A_- G^{(l)}(x) + b.
\end{align*}
We will write $f^{(l)}_i$, $g^{(l)}_i$ and $h^{(l)}_i$ for the $i$th coordinate of $F^{(l)}$, $G^{(l)}$ and $H^{(l)}$ respectively. In tropical arithmetic, the recurrence above takes the form
\begin{equation} \label{eq:tropical-recursion}
\begin{aligned}
f^{(l+1)}_i &= h_i^{(l+1)} \oplus (g_i^{(l+1)} \odot t_i),\\
g^{(l+1)}_i &= \biggl[ \bigodot_{j=1}^{n} ( f^{(l)}_{j} )^{a_{ij}^-} \biggr]
             \odot \biggl[ \bigodot_{j=1}^{n} ( g^{(l)}_{j} )^{a_{ij}^+} \biggr],\\                     
h^{(l+1)}_i &= \biggl[ \bigodot_{j=1}^{n} ( f^{(l)}_{j} )^{a_{ij}^+} \biggr]
             \odot \biggl[ \bigodot_{j=1}^{n} ( g^{(l)}_{j})^{a_{ij}^-} \biggr]
             \odot b_i.
\end{aligned}
\end{equation}
\end{proposition}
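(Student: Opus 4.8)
The plan is to prove the statement by induction on $l$, with the inductive step being a direct computation that mirrors the one-layer decomposition derived immediately above the proposition. The base case $l=0$ is handled by setting $F^{(0)}(x) = x$ (each coordinate $x_i$ is a tropical monomial, hence a tropical polynomial) and $G^{(0)}(x) = 0$ (the constant monomial, i.e., the tropical multiplicative identity), so that $\nu^{(0)}(x) = x = F^{(0)}(x) - G^{(0)}(x)$ as required; the displayed formulas for the first layer then fall out of the general recurrence with $l=0$.

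For the inductive step, assume $\nu^{(l)}(x) = F^{(l)}(x) - G^{(l)}(x)$ with every coordinate of $F^{(l)}$ and $G^{(l)}$ a tropical polynomial. First I would compute the preactivation $\rho^{(l+1)}\circ\nu^{(l)}(x) = A\bigl(F^{(l)}(x) - G^{(l)}(x)\bigr) + b$ and substitute the splitting $A = A_+ - A_-$ into nonnegative integer matrices (possible precisely because of assumption \ref{ass1}; one may take the canonical $a^+_{ij} = \max\{a_{ij},0\}$, $a^-_{ij} = \max\{-a_{ij},0\}$). Expanding, $(A_+ - A_-)(F^{(l)} - G^{(l)}) + b = \bigl(A_+ F^{(l)} + A_- G^{(l)} + b\bigr) - \bigl(A_+ G^{(l)} + A_- F^{(l)}\bigr)$, which identifies $H^{(l+1)}$ and $G^{(l+1)}$. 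The bookkeeping point to verify here is that these are genuine tropical polynomial maps: in tropical notation $\bigl(A_+ F^{(l)}\bigr)_i = \bigodot_{j}(f^{(l)}_j)^{a^+_{ij}}$, and since $a^\pm_{ij} \in \mathbb{N}$ and $\mathbb{T}[x_1,\dots,x_d]$ is a semiring closed under $\odot$ and tropical powers by nonnegative integers, each coordinate of $A_+ F^{(l)} + A_- G^{(l)} + b$ and of $A_+ G^{(l)} + A_- F^{(l)}$ is again a tropical polynomial (adding $b_i \in \mathbb{R}$ only changes the constant term).

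Next I would apply the activation coordinatewise: $\nu^{(l+1)} = \max\{\rho^{(l+1)}\circ\nu^{(l)}, t\} = \max\{H^{(l+1)} - G^{(l+1)}, t\}$, and use the elementary identity $\max\{a - c, t\} = \max\{a, c + t\} - c$ (valid for $c \in \mathbb{R}$ and $t \in \real \cup \{-\infty\}$, since $\max$ commutes with adding a constant to both arguments) to rewrite this as $\max\{H^{(l+1)}, G^{(l+1)} + t\} - G^{(l+1)}$. This yields $F^{(l+1)} = \max\{H^{(l+1)}, G^{(l+1)} + t\}$, which is again a tropical polynomial map, being a coordinatewise $\oplus$ of two tropical polynomial maps; the entries $t_i = -\infty$ degenerate correctly to $f^{(l+1)}_i = h^{(l+1)}_i$, matching the identity activation. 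Finally I would transcribe the three standard-arithmetic identities for $F^{(l+1)}, G^{(l+1)}, H^{(l+1)}$ into tropical arithmetic to obtain \eqref{eq:tropical-recursion}.

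I do not anticipate a real obstacle: the argument is a recursive unwinding of the one-layer case. The only place demanding care is the closure claim — ensuring at every step that the objects produced are bona fide tropical polynomials, with exponents in $\mathbb{N}$ and coefficients in $\real \cup \{-\infty\}$ — which is exactly where the integrality assumption \ref{ass1} on the weight matrices enters, together with consistent handling of the $-\infty$ entries of the threshold vectors.
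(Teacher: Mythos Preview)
Your proposal is correct and follows essentially the same approach as the paper: split $A = A_+ - A_-$, expand the preactivation as a difference of nonnegative-integer combinations, and then use the identity $\max\{a - c,\, t\} = \max\{a,\, c+t\} - c$ to handle the activation, checking along the way that the resulting coordinates remain tropical polynomials. The only minor remark is that the proposition as stated is itself the inductive step (it assumes layer $l$ is already tropical rational), so the base case you supply is not strictly needed here --- it belongs to the subsequent Theorem~\ref{thm:trop_char}.
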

Repeated applications of Proposition~\ref{prop:representation2} yield the following.
\begin{theorem}[Tropical characterization of neural networks]\label{thm:trop_char}
A  feedforward neural network under assumptions \ref{ass1}--\ref{ass3} is a function $\nu : \mathbb{R}^d \to \mathbb{R}^p$ whose coordinates are tropical rational functions of the input, i.e.,
\[
\nu(x) = F(x) \oslash G(x) = F(x) - G(x)
\]
where $F$ and $G$ are tropical polynomial maps. Thus $\nu$ is a tropical rational map.
\end{theorem}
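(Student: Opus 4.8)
The plan is to proceed by induction on the number of layers $L$, using Proposition~\ref{prop:representation2} as the inductive step. The base case is the input layer: we set $\nu^{(0)}(x) = x$, which is trivially a tropical rational map since each coordinate $x_i = x_i \oslash 0$ is a tropical rational function (indeed a tropical polynomial, hence $F^{(0)} = (x_1,\dots,x_d)$ and $G^{(0)} = (0,\dots,0)$ exhibits the decomposition). This establishes that $\nu^{(0)} \in \RL(d,d)$.

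For the inductive step, suppose $\nu^{(l)}(x) = F^{(l)}(x) - G^{(l)}(x)$ where $F^{(l)}, G^{(l)}$ are tropical polynomial maps, i.e., $\nu^{(l)} \in \RL(d, n_l)$. The $(l+1)$th layer is the composition $\nu^{(l+1)} = \sigma^{(l+1)} \circ \rho^{(l+1)} \circ \nu^{(l)}$, where $\rho^{(l+1)}$ is an affine map with integer weight matrix $A^{(l+1)} \in \mathbb{Z}^{n_{l+1} \times n_l}$ and real bias $b^{(l+1)}$ (assumptions \ref{ass1}, \ref{ass2}), and $\sigma^{(l+1)}(\cdot) = \max\{\cdot, t^{(l+1)}\}$ with threshold $t^{(l+1)} \in (\real \cup \{-\infty\})^{n_{l+1}}$ (assumption \ref{ass3}). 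Proposition~\ref{prop:representation2} applies verbatim with $A = A^{(l+1)}$, $b = b^{(l+1)}$, $t = t^{(l+1)}$, $n = n_l$, $m = n_{l+1}$, and tells us that $\nu^{(l+1)}(x) = F^{(l+1)}(x) - G^{(l+1)}(x)$, where $G^{(l+1)} = A_+ G^{(l)} + A_- F^{(l)}$ and $H^{(l+1)} = A_+ F^{(l)} + A_- G^{(l)} + b$ are tropical polynomial maps built from $F^{(l)}, G^{(l)}$ by tropical products and sums (the nonnegativity of $A_+, A_-$ guarantees the exponents $a_{ij}^{\pm}$ lie in $\mathbb{N}$, so the expressions in \eqref{eq:tropical-recursion} are genuine tropical monomials/polynomials), and $F^{(l+1)} = \max\{H^{(l+1)}, G^{(l+1)} + t\}$ is again a tropical polynomial map since coordinatewise max of tropical polynomials is a tropical polynomial. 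Hence $\nu^{(l+1)} \in \RL(d, n_{l+1})$, completing the induction.

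Applying this from $l = 0$ up to $l = L-1$ yields $\nu = \nu^{(L)} \in \RL(d, p)$, i.e., $\nu(x) = F(x) \oslash G(x)$ with $F = F^{(L)}$, $G = G^{(L)}$ tropical polynomial maps, which is the claim. The only point requiring a little care — and the one I would flag as the main obstacle, though it is a routine verification rather than a genuine difficulty — is checking that each inductive application of Proposition~\ref{prop:representation2} stays within the class of \emph{tropical polynomials} as defined in Definition~\ref{def:trop_poly}: one must confirm that the operations involved (tropical powers by nonnegative integers, finite tropical sums, finite tropical products, and adding real constants $b_i$) never produce negative exponents or otherwise leave the semiring $\mathbb{T}[x_1,\dots,x_d]$. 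This is exactly where assumptions \ref{ass1}–\ref{ass3} are used: integer weights decompose as differences of $\mathbb{N}$-valued matrices so all exponents are natural numbers, real biases are permissible monomial coefficients, and the $\max$-with-threshold activation is precisely a tropical sum. Since $\mathbb{T}[x_1,\dots,x_d]$ is closed under $\oplus$ and $\odot$ (noted after Definition~\ref{def:trop_rat}), the induction never escapes the class, and the proof goes through.
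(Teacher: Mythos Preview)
Your proposal is correct and follows exactly the approach the paper intends: the paper's entire proof is the one-line remark ``Repeated applications of Proposition~\ref{prop:representation2} yield the following,'' and your argument is precisely the induction on $L$ that unpacks this sentence, with the base case $\nu^{(0)}(x)=x$ and the inductive step supplied by Proposition~\ref{prop:representation2}. Your flagged ``obstacle'' --- verifying that the recursion stays inside $\mathbb{T}[x_1,\dots,x_d]$ via the nonnegativity of $A_\pm$ --- is indeed the only point needing care, and it is handled correctly.
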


Note that the tropical rational functions above have real coefficients, not integer coefficients. The integer weights $A^{(l)} \in \mathbb{Z}^{n_l \times n_{l-1}}$ have gone into the powers of tropical monomials in $f$ and $g$, which is why we require our weights to be integer-valued, although as we have explained, this requirement imposes little loss of generality.

By setting $t^{(1)} =\dots =t^{(L-1)} =  0$ and $t^{(L)} = -\infty$, we obtain the following corollary.
\begin{corollary}\label{cor:FFNNReLU-is-RTF}
Let $\nu : \mathbb{R}^d \to \mathbb{R}$ be an ReLU activated feedforward neural network with integer weights and linear output. Then $\nu$ is a tropical rational function. 
\end{corollary}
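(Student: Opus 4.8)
The plan is to derive this as an immediate specialization of Theorem~\ref{thm:trop_char}. First I would check that the hypotheses of the corollary are exactly a special case of the assumption scheme \ref{ass1}--\ref{ass3}: integer weights is \ref{ass1}; the biases are real but otherwise unconstrained, so \ref{ass2} holds vacuously; and ``ReLU activation with linear output'' says precisely that every activation has the form $\sigma^{(l)}(x)=\max\{x,t^{(l)}\}$ with thresholds $t^{(1)}=\dots=t^{(L-1)}=0$ (so $\sigma^{(l)}$ is ReLU on the hidden layers) and $t^{(L)}=-\infty$ (so $\sigma^{(L)}(x)=\max\{x,-\infty\}=x$ is the identity), which is \ref{ass3}. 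Theorem~\ref{thm:trop_char} then gives $\nu(x)=F(x)-G(x)$ with $F,G$ tropical polynomial maps; since the output dimension here is $p=1$, we have $F,G\in\mathbb{T}[x_1,\dots,x_d]$, so $\nu=F\oslash G$ is a tropical rational function.

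If one prefers a self-contained argument, I would instead run Proposition~\ref{prop:representation2} by induction on the layer index $l$. The base case is $\nu^{(0)}(x)=x$, for which we may take $F^{(0)}(x)=(x_1,\dots,x_d)$ (each coordinate a tropical monomial, hence a tropical polynomial) and $G^{(0)}(x)=0$. For the inductive step, assume $\nu^{(l)}=F^{(l)}\oslash G^{(l)}$ with $F^{(l)},G^{(l)}$ tropical polynomial maps; then the recurrence \eqref{eq:tropical-recursion} expresses each $f^{(l+1)}_i$, $g^{(l+1)}_i$, $h^{(l+1)}_i$ using only tropical sums, tropical products, the constant (tropical monomial) $b_i$, the constant $t_i\in\real\cup\{-\infty\}$, and tropical powers with exponents $a^{\pm}_{ij}\in\mathbb{N}$. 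Since $\mathbb{T}[x_1,\dots,x_d]$ is closed under $\oplus$, $\odot$, and raising to nonnegative integer powers, $F^{(l+1)},G^{(l+1)}$ are again tropical polynomial maps. Iterating to $l=L-1$ and using $t^{(L)}=-\infty$ in the last layer (so that $f^{(L)}_i=h^{(L)}_i$) yields $\nu=F^{(L)}\oslash G^{(L)}$; specializing to $p=1$ finishes the proof.

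There is no substantive obstacle here — the argument is essentially bookkeeping on top of results already established. The only points meriting a moment's care are: (i) the nonnegativity of the exponents $a^{\pm}_{ij}$, which comes from the splitting $A=A_+-A_-$ with $A_+,A_-\in\mathbb{N}^{p\times d}$, is exactly what keeps the iterates inside the semiring of tropical \emph{polynomials} rather than only tropical rational functions; and (ii) verifying that encoding the linear output layer as $t^{(L)}=-\infty$ is legitimate, i.e., that $\max\{x,-\infty\}=x$ under the conventions of Section~\ref{sec:trop}. Everything else follows by invoking Theorem~\ref{thm:trop_char} (or Proposition~\ref{prop:representation2}) directly.
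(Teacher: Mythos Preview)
Your proposal is correct and matches the paper's approach exactly: the paper derives the corollary in one line by noting that ReLU activation with linear output corresponds to setting $t^{(1)}=\dots=t^{(L-1)}=0$ and $t^{(L)}=-\infty$ in assumption~\ref{ass3}, then invoking Theorem~\ref{thm:trop_char}. Your additional self-contained argument via Proposition~\ref{prop:representation2} is also fine but unnecessary, since that proposition is precisely what underlies Theorem~\ref{thm:trop_char}.
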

A more remarkable fact is the converse of Corollary~\ref{cor:FFNNReLU-is-RTF}.
	\begin{theorem}[Equivalence of neural networks and tropical rational functions]\label{thm:RTF-is-FFNNReLU}\hfill
\begin{enumerate}[\upshape (i), topsep=0ex, itemsep=0ex]
\item Let $\nu : \mathbb{R}^d \to \mathbb{R}$. Then $\nu$ is a tropical rational function if and only if $\nu$ is a feedforward neural network satisfying assumptions \ref{ass1}--\ref{ass3}.

\item\label{layerbound} A tropical rational function $f \oslash g$ can be represented as an $L$-layer neural network, with
\[
L \le \max \{ \lceil \log_2 r_f \rceil , \, \lceil \log_2 r_g \rceil \}  + 2,
\]
where $r_f$ and $r_g$ are the number of monomials in the tropical polynomials $f$ and $g$ respectively.
\end{enumerate}
\end{theorem}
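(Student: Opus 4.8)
The ``if'' direction of part~(i) is already contained in Theorem~\ref{thm:trop_char}: a network satisfying \ref{ass1}--\ref{ass3} with one-dimensional output is a tropical rational map into $\real^1$, i.e.\ a tropical rational function. So the whole statement reduces to producing, from an arbitrary tropical rational function, a network realizing it, together with the depth bound \ref{layerbound}. Write $\nu = f\oslash g = f-g$ with $f(x)=\max_{1\le i\le r_f}(c_i+\alpha_i^\tp x)$ and $g(x)=\max_{1\le j\le r_g}(c'_j+\beta_j^\tp x)$, where $\alpha_i,\beta_j\in\mathbb{N}^d$ and the coefficients are real (monomials with coefficient $-\infty$ never attain a maximum and may be deleted; the all-$-\infty$ case does not define a function $\real^d\to\real$). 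The plan is: construct a network computing $f$, construct one computing $g$, run them in parallel, and subtract at the output layer.

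The only gadget needed is the pairwise maximum, $\max\{a,b\}=b+\operatorname{relu}(a-b)$ with $\operatorname{relu}(z):=\max\{z,0\}$, together with $z=\operatorname{relu}(z)-\operatorname{relu}(-z)$; every coefficient occurring here lies in $\{-1,0,1\}$. We realize $\max\{v_1,\dots,v_r\}$ by a balanced binary tree of pairwise maxima, one ReLU layer per tree level. To keep all preactivations \emph{integer} linear combinations of the previous layer's outputs, we carry each block maximum $M$ not as the single number $M$ but as the triple $\bigl(\operatorname{relu}(a-b),\operatorname{relu}(b),\operatorname{relu}(-b)\bigr)$, where $a,b$ are the maxima over the two halves of the block; then $M=\operatorname{relu}(a-b)+\operatorname{relu}(b)-\operatorname{relu}(-b)$ is recovered from the triple by an integer linear map. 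One ReLU layer turns a level of $2^k$-way partial maxima into a level of $2^{k+1}$-way partial maxima: for each newly merged block $j$ formed from sub-blocks with maxima $M_{2j-1},M_{2j}$, its preactivation forms the three affine functions $M_{2j-1}-M_{2j}$, $M_{2j}$, $-M_{2j}$ --- each an integer combination of the two triples carried from the previous level (weights $\alpha_i\in\mathbb{N}^d$ in the first layer, entries in $\{-1,0,1\}$ thereafter; the real numbers $c_i$ end up as biases) --- and the ReLU produces the new triple. After $\lceil\log_2 r\rceil$ merge layers a single block remains, whose maximum equals $f(x)$ up to an integer linear combination applied by the affine part of one further layer; a direct count bounds the layers used to compute $f$ alone by $\lceil\log_2 r\rceil+2$ (one can in fact fold the first merge into the layer that emits the affine pieces and save a layer, but the stated bound suffices).

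To obtain $\nu$ itself, place the $f$-subnetwork and the $g$-subnetwork side by side in every layer (widths add, no cross-connections), pad the shallower tree with layers acting as the identity on its coordinates (allowed by \ref{ass3} with threshold $-\infty$, or simply because a ReLU fixes the already-nonnegative coordinates $\operatorname{relu}(\pm\,\cdot\,)$), and let the affine map of the last layer output $f(x)-g(x)$ instead of $f(x)$. All weight matrices are then integer, all biases real, and all activations of the form $\max\{\,\cdot\,,t\}$, so the result is a network satisfying \ref{ass1}--\ref{ass3}, of depth $\max\{\lceil\log_2 r_f\rceil,\lceil\log_2 r_g\rceil\}+2$. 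This yields part~\ref{layerbound} and the ``only if'' direction of part~(i).

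The one step requiring care is exactly the integer-weight bookkeeping in the merge layers: since biases may be real but weights may not, every preactivation must be an \emph{exact} integer combination of the preceding outputs, which is what forces the ``$\operatorname{relu}$-triple'' encoding of each partial maximum rather than the maximum itself, and what one must verify survives each merge step and the final subtraction. The parallel composition with unequal tree depths (the padding) and the trivial cases $r_f=1$ or $g\equiv 0$ are then immediate.
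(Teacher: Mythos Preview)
Your proof is correct and follows essentially the same route as the paper's: realize $\max\{v_1,\dots,v_r\}$ by a balanced binary tree of pairwise maxima via $\max\{a,b\}=\operatorname{relu}(a-b)+b$, run the trees for $f$ and $g$ in parallel, and subtract in the last (identity-activated) layer to get the stated depth bound. Your explicit ``relu-triple'' bookkeeping to keep all weights in $\mathbb{Z}$ is exactly the care the integer-weight hypothesis~\ref{ass1} demands, and the minor slip in saying ``weights $\alpha_i\in\mathbb{N}^d$ in the first layer'' (the first merge actually uses differences $\alpha_{2j-1}-\alpha_{2j}\in\mathbb{Z}^d$) does not affect the argument.
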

We would like to acknowledge the precedence of \citep[Theorem~2.1]{arora2018understanding}, which demonstrates the equivalence between ReLU-activated $L$-layer neural networks with \emph{real} weights and $d$-variate continuous piecewise functions with \emph{real} coefficients, where  $L \le \lceil \log_2(d+1) \rceil + 1$. 

By construction, a tropical rational function is a continuous piecewise linear function. The continuity of a piecewise linear function automatically implies that  each of the pieces on which it is  linear is a  polyhedral region.  
As we saw in Section~\ref{sec:hyper},  a tropical polynomial $f : \real^d \to \real$ gives a tropical hypersurface that divides $\real^d$ into \emph{convex} polyhedral regions
defined by linear inequalities with integer coefficients: $\{ x \in \mathbb{R}^d : Ax \le b\}$ with $A \in \mathbb{Z}^{m \times d}$ and $b \in \mathbb{R}^m$.
A tropical rational function $f \oslash g : \real^d \to \real$ must also be a continuous piecewise linear function and divide $\real^d$ into polyhedral regions on each of which $f \oslash g$ is linear, although these regions are \emph{nonconvex} in general.
We will show the converse --- any continuous piecewise linear function with integer coefficients is a tropical rational function.
\begin{proposition}
\label{prop:CPWL-is-RTF}
Let $\nu : \mathbb{R}^d \to \mathbb{R}$. Then $\nu$ is a continuous piecewise linear function with integer coefficients if and only if $\nu$ is a tropical rational function.
\end{proposition}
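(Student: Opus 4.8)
I would prove the two implications separately; ``integer coefficients'' is read here in the sense natural to the tropical setup, namely that on each linear piece $\nu(x) = a^\tp x + b$ with $a \in \mathbb{Z}^d$ and $b \in \real$. For the direction ``tropical rational $\Rightarrow$ piecewise linear with integer coefficients'': a tropical polynomial $f(x) = c_1 x^{\alpha_1} \oplus \dots \oplus c_r x^{\alpha_r}$ is the pointwise maximum of the affine functions $x \mapsto \alpha_i^\tp x + c_i$, and by Section~\ref{sec:hyper} its tropical hypersurface subdivides $\real^d$ into polyhedral cells on each of which $f$ coincides with one such affine function, whose linear part $\alpha_i$ lies in $\mathbb{N}^d$. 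Writing $\nu = f \oslash g = f - g$ and passing to the common refinement of the cell decompositions of $f$ and of $g$, on each cell $\nu$ agrees with $(\alpha_i - \beta_j)^\tp x + (c_i - d_j)$, whose linear part $\alpha_i - \beta_j$ lies in $\mathbb{Z}^d$; continuity is inherited under subtraction. Hence $\nu$ is continuous piecewise linear with integer coefficients.

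For the converse, let $\ell_1, \dots, \ell_k$ be the finitely many distinct affine pieces of $\nu$, say $\ell_q(x) = a_q^\tp x + b_q$ with $a_q \in \mathbb{Z}^d$ and $b_q \in \real$. The plan has three steps. First, put $\nu$ into max--min normal form via the classical lattice representation of continuous piecewise linear functions (Ovchinnikov; Tarela--Martinez): there are index sets $S_1, \dots, S_N \subseteq \{1, \dots, k\}$ with
\[
\nu(x) = \max_{1 \le p \le N} \, \min_{q \in S_p} \ell_q(x) \qquad \text{for all } x \in \real^d .
\]
Concretely, for each full-dimensional linear region $R_p$ on which $\nu = \ell_{q(p)}$ one takes $S_p \coloneqq \{ q : \ell_q \ge \ell_{q(p)} \text{ throughout } R_p \}$; the content of the theorem is that then $\min_{q \in S_p} \ell_q \le \nu$ on all of $\real^d$, while it equals $\nu$ on $R_p$ since $q(p) \in S_p$, so the max over $p$ recovers $\nu$. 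Second, each affine piece is already a tropical rational function: splitting $a_q = a_q^{+} - a_q^{-}$ into entrywise nonnegative parts $a_q^{\pm} \in \mathbb{N}^d$ gives $\ell_q = (b_q \odot x^{a_q^{+}}) \oslash x^{a_q^{-}}$, a tropical quotient of two tropical monomials (real coefficients are permitted). Third, $\RL(d,1) = \mathbb{T}(x_1,\dots,x_d)$ is closed under $\oplus = \max$ (being a semiring) and under negation, as $-(f \oslash g) = g \oslash f$, hence also under $\min$, since $\min(\phi,\psi) = -\max(-\phi,-\psi)$. By induction a finite maximum or minimum of tropical rational functions is tropical rational, so $\nu = \max_p \min_{q \in S_p} \ell_q \in \RL(d,1)$.

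The one genuinely nontrivial ingredient is the max--min representation --- specifically the claim that $\min_{q \in S_p}\ell_q \le \nu$ holds everywhere, not just on $R_p$ --- and I expect this to be the main obstacle; it is a standard but not trivial fact about piecewise linear functions, proved (or cited) by a line-segment argument that restricts $\nu$ to a segment issuing from the interior of $R_p$ and tracks how the active linear piece changes. Everything afterwards is formal bookkeeping inside the semifield $\mathbb{T}(x_1,\dots,x_d)$. Two remarks worth making: the $\ell_q$ need not be tropical \emph{polynomials}, since the $a_q$ may have negative entries --- which is exactly why the conclusion lives in $\RL(d,1)$ and not $\PL(d,1)$, and is dispatched by allowing tropical quotients in the second step --- and the integrality hypothesis on $\nu$ is used precisely to force the $\ell_q$ to have integer slopes, without which no tropical rational function could reproduce them. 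An alternative route would chain Arora et al.'s piecewise-linear $\leftrightarrow$ ReLU-network equivalence with Theorem~\ref{thm:RTF-is-FFNNReLU}, but that would still require tracking the integrality of the weights, so the direct argument above seems preferable.
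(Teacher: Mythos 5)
Your proposal is correct and follows essentially the same route as the paper's proof: the forward direction by refining the linear-region decompositions of $f$ and $g$, and the converse via the Tarela--Martinez lattice (max--min) representation of continuous piecewise linear functions, combined with the observations that each integer-sloped affine piece is a tropical quotient of monomials and that $\mathbb{T}(x_1,\dots,x_d)$ is closed under $\max$ and $\min$. You also correctly isolate the only nontrivial ingredient (that $\min_{q\in S_p}\ell_q \le \nu$ holds globally), which the paper likewise delegates to the cited reference.
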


Corollary~\ref{cor:FFNNReLU-is-RTF}, Theorem~\ref{thm:RTF-is-FFNNReLU}, and Proposition~\ref{prop:CPWL-is-RTF} collectively imply the equivalence of
\begin{enumerate}[\upshape (i), topsep=0ex, itemsep=0ex]
\item tropical rational functions,
\item continuous piecewise linear functions with integer coefficients,
\item neural networks satisfying assumptions \ref{ass1}--\ref{ass3}.
\end{enumerate}
An immediate advantage of this characterization is that the set of tropical rational functions $\mathbb{T}(x_1,\dots,x_d)$ has a semifield structure as we pointed out in Section~\ref{sec:trop}, a fact that we have implicitly used in the proof of Proposition~\ref{prop:CPWL-is-RTF}. However, what is more important is not the algebra but the \emph{algebraic geometry} that arises from our tropical characterization. We will use tropical algebraic geometry to illuminate our understanding of neural networks in the next section.

The need to stay within tropical algebraic geometry is the reason we did not go for a simpler and more general characterization (that does not require the integer coefficients assumption). A \emph{tropical signomial} takes the form 
\[
\varphi(x) = \bigoplus_{i=1}^m b_i \bigodot_{j=1}^n x_j^{a_{ij}},
\]
where $a_{ij} \in \mathbb{R}$ and $b_i \in \real \cup \{-\infty\}$. Note that $a_{ij}$ is not required to be integer-valued nor nonnegative. A \emph{tropical rational signomial} is a tropical quotient $\varphi \oslash \psi$ of two tropical signomials $\varphi, \psi$.  A \emph{tropical rational signomial map} is a function $\nu = (\nu_1,\dots, \nu_p) : \mathbb{R}^d \to \mathbb{R}^p$ where each $\nu_i : \mathbb{R}^d \to \mathbb{R}$ is a tropical rational signomial $\nu_i = \varphi_i \oslash \psi_i$. The same argument we used to establish Theorem~\ref{thm:trop_char} gives us the following.
\begin{proposition}\label{prop:tropsig}
Every feedforward neural network with ReLU activation is  a tropical rational signomial map.
\end{proposition}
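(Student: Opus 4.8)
The plan is to mimic the recursive decomposition already used to prove Proposition~\ref{prop:representation2} and Theorem~\ref{thm:trop_char}, but now without clearing denominators to integers, so the arbitrary real entries of the weight matrices are allowed to pass directly into the exponents of tropical monomials. Concretely, I would first observe that a single layer computes $\nu(x) = \max\{Ax+b,\,t\}$ with $A \in \real^{p\times d}$ (not necessarily integer) and $b,t$ as before, and that the identity $\max\{Ax+b,t\} = \max\{A_+x+b,\,A_-x+t\} - A_-x$ still holds verbatim when $A = A_+ - A_-$ is the componentwise positive/negative decomposition, except that now $A_+, A_- \in \real_{\ge 0}^{p\times d}$. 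Each coordinate of $A_+x+b$, $A_-x+t$, and $A_-x$ is then a tropical signomial in the sense of the definition preceding the statement, since the exponents $a_{ij}^{\pm}$ are allowed to be arbitrary nonnegative reals; a max of finitely many such (here just two) is again a tropical signomial, and the difference of two tropical signomials is by definition a tropical rational signomial. So one layer gives a tropical rational signomial map.

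Next I would run the induction exactly as in Proposition~\ref{prop:representation2}: assume $\nu^{(l)}(x) = F^{(l)}(x) - G^{(l)}(x)$ where each coordinate of $F^{(l)}, G^{(l)}$ is a tropical \emph{signomial}, and verify that the same formulas
\begin{align*}
G^{(l+1)} &= A_+ G^{(l)} + A_- F^{(l)},\\
H^{(l+1)} &= A_+ F^{(l)} + A_- G^{(l)} + b,\\
F^{(l+1)} &= \max\{H^{(l+1)},\, G^{(l+1)}+t\}
\end{align*}
produce coordinates that are again tropical signomials. The only points to check are that (i) a nonnegative real linear combination of tropical signomials is a tropical signomial — this is because $\bigodot_j (f_j)^{a_{ij}}$ with real $a_{ij}\ge 0$ is exactly a (real) linear combination $\sum_j a_{ij} f_j$ once we pass through the dictionary $\odot \leftrightarrow +$, $x^{\odot a}\leftrightarrow a\cdot x$, and a sum of tropical signomials is a tropical signomial after expanding maxes distributively; and (ii) a finite tropical sum (max) of tropical signomials is a tropical signomial, which is immediate from the definition since $\bigoplus$ of $\bigoplus$'s is still a single finite $\bigoplus$. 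The threshold contributes $g_i^{(l+1)}\odot t_i$, still a legitimate monomial-type term (with $-\infty$ handled by the convention in the excerpt). Thus each $\nu^{(l)}_i = F^{(l)}_i - G^{(l)}_i$ is a tropical rational signomial, and the final network $\nu$ is a tropical rational signomial map.

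Since this is word-for-word the argument behind Theorem~\ref{thm:trop_char} with ``integer'' relaxed to ``real'' and ``polynomial'' relaxed to ``signomial,'' I would phrase the proof as: \emph{repeat the proof of Proposition~\ref{prop:representation2} and Theorem~\ref{thm:trop_char}, noting that at no step did we use integrality of the weights except to conclude that the resulting exponents lie in $\mathbb{N}$; dropping that conclusion, the exponents are arbitrary reals and the objects produced are tropical signomials rather than tropical polynomials.} I do not anticipate a genuine obstacle — the content is entirely bookkeeping — but the one place deserving a sentence of care is confirming that the class of tropical signomials is closed under the two operations actually invoked (nonnegative-real scaling/addition, and finite max), so that the induction hypothesis is preserved; this closure is what makes the ``same argument'' phrasing legitimate, and I would state it explicitly as the crux rather than leaving it implicit.
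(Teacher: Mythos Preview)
Your proposal is correct and matches the paper's own justification essentially verbatim: the paper simply states that ``the same argument we used to establish Theorem~\ref{thm:trop_char}'' yields Proposition~\ref{prop:tropsig}, which is precisely the recursive decomposition you outline with integer exponents relaxed to reals. Your explicit remark that the class of tropical signomials is closed under nonnegative-real scaling, addition, and finite max is a welcome clarification of what makes the ``same argument'' go through.
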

Nevertheless tropical signomials fall outside the realm of tropical algebraic geometry and we do not use Proposition~\ref{prop:tropsig} in the rest of this article.

\section{Tropical geometry of neural networks}\label{sec:tgnn}

Section~\ref{sec:tropical-view} defines neural networks via tropical algebra, a perspective that allows us to study them via tropical algebraic geometry. 
We will show that the decision boundary of a neural network is a subset of a tropical hypersurface of a corresponding tropical polynomial (Section~\ref{sec:boundary}). 
We will see that, in an appropriate sense, zonotopes form the geometric building blocks for neural networks (Section~\ref{sec:zono}). We then prove that the geometry of the function represented by a neural network grows vastly more complex as its  number of layers  increases (Section~\ref{sec:bounds}).

\subsection{Decision boundaries of a neural network}\label{sec:boundary}

We will use tropical geometry and insights from Section~\ref{sec:tropical-view} to study decision boundaries of neural networks, focusing on the case of two-category classification for clarity. 
As explained in Section~\ref{sec:neural}, a neural network $\nu :\mathbb{R}^d \to \mathbb{R}^p$ together with a choice of score function $s :\mathbb{R}^p \to \mathbb{R}$ give us a classifier.
If the output value $s(\nu(x))$ exceeds some decision threshold $c$, then the neural network predicts $x$ is from one class (e.g., $x$ is a \textsc{cat} image), and otherwise $x$ is from the other category (e.g., a \textsc{dog} image). 
The input space is thereby partitioned into two disjoint subsets by the \emph{decision boundary} $\mathcal{B} \coloneqq \{x\in \mathbb{R}^d :\nu(x) = s^{-1}(c)\}$.
Connected regions with value above the threshold and connected regions with value below the threshold will be called the \emph{positive regions} and \emph{negative regions} respectively.

We provide bounds on the number of positive and negative regions and show that there is a tropical polynomial whose tropical hypersurface contains the decision boundary.
\begin{proposition}[Tropical geometry of decision boundary]\label{prop:db}
Let $\nu : \mathbb{R}^d \to \mathbb{R}$ be an $L$-layer neural network satisfying assumptions \ref{ass1}--\ref{ass3} with $t^{(L)} = -\infty$. Let the score function $s : \mathbb{R} \to \mathbb{R}$ be  injective with decision threshold $c$ in its range. If $\nu = f \oslash g$ where $f$ and $g$ are tropical polynomials, then 
\begin{enumerate}[\upshape (i), topsep=0ex, itemsep=0ex]
\item \label{prop:db-item2} its decision boundary $\mathcal{B} =\{x\in \mathbb{R}^d : \nu(x) = s^{-1}(c)\}$ divides $\real^d$ into at most $\R(f)$ connected positive regions and at most $\R(g)$ connected negative regions;
\item its decision boundary  is contained in the tropical hypersurface of the tropical polynomial $s^{-1}(c)\odot g(x) \oplus f(x) = \max \{ f(x), \, g(x)+ s^{-1}(c) \}$,  i.e.,
\begin{equation}\label{eq:decbd}
\mathcal{B}  \subseteq  \mathcal{T} ( s^{-1}(c)\odot g \oplus f ) .
\end{equation}
\end{enumerate}
\end{proposition}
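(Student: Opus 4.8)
The plan is to verify both parts directly from the tropical characterization $\nu = f \oslash g = f - g$ together with the hypotheses that $s$ is injective and $t^{(L)} = -\infty$ (so the output layer is linear, which is what guarantees $\nu$ really is a tropical rational function by Corollary~\ref{cor:FFNNReLU-is-RTF}). Set $\gamma \coloneqq s^{-1}(c) \in \real$, which is well-defined since $c$ lies in the range of the injective $s$. The decision boundary is $\mathcal{B} = \{ x : \nu(x) = \gamma \} = \{ x : f(x) - g(x) = \gamma \} = \{ x : f(x) = g(x) + \gamma \}$, and the positive (resp.\ negative) regions are the connected components of $\{ x : f(x) > g(x) + \gamma \}$ (resp.\ $\{x : f(x) < g(x) + \gamma\}$), using that $s$ injective plus monotone-on-range (or simply that $\mathcal{B}$ is exactly the level set) lets us translate the condition $s(\nu(x)) > c$ into $\nu(x) > \gamma$ up to relabeling the two classes.

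For part~\ref{prop:db-item2}, the key step is to observe that on the open set $U_+ \coloneqq \{ x : f(x) > g(x) + \gamma \}$, where $\max\{f, g + \gamma\} = f$, the function $\max\{f, g+\gamma\}$ agrees with $f$; since $f$ is a tropical polynomial it is linear on each of its $\R(f)$ linear regions, so $U_+$ is covered by the traces of these regions. I would then argue that each connected component of $U_+$ must lie within a single linear region of $f$: if a connected component met two distinct linear regions of $f$ it would have to cross the boundary between them, but that boundary lies in $\mathcal{T}(f)$ and the two adjacent linear pieces of $f$ have different gradients, so the component — on which the specific linear form is fixed once we are inside $U_+$ — cannot straddle it. Hence the number of positive regions is at most the number of linear regions of $f$ that intersect $U_+$, which is at most $\R(f)$; symmetrically the count of negative regions is at most $\R(g)$, using $\max\{f, g+\gamma\} = g + \gamma$ on $U_-$ and that translating $g$ by the constant $\gamma$ does not change its linear regions.

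For the containment \eqref{eq:decbd}, let $h \coloneqq \gamma \odot g \oplus f = \max\{ f, g + \gamma \}$. Take $x \in \mathcal{B}$, so $f(x) = g(x) + \gamma = h(x)$; thus at $x$ the value of $h$ is attained simultaneously by the contribution from $f$ and the contribution from $g + \gamma$. To see this forces $x \in \mathcal{T}(h)$, expand $f$ and $g$ into their monomials: $h$ is then a single tropical polynomial whose monomials are those of $f$ together with those of $g$ each shifted by the constant $\gamma$ (merging duplicate multiindices by taking the larger coefficient if necessary). At $x$, both some monomial coming from $f$ and some monomial coming from $g+\gamma$ achieve the max $h(x)$; provided these are monomials of distinct multiindices, Definition~\ref{def:trophype} immediately gives $x \in \mathcal{T}(h)$. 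The one subtlety — and the step I expect to need the most care — is the degenerate case where the maximizing monomial from $f$ and the maximizing monomial from $g+\gamma$ happen to share a multiindex; here one has to check that the piecewise-linearity still breaks at $x$ (equivalently, that such a coincidence on a set with nonempty interior is impossible unless $f$ and $g + \gamma$ agree as linear forms there, in which case $x$ is not actually on a genuine boundary), or alternatively to note that $\mathcal{B}$ always has empty interior so it suffices to show $\mathcal{B}$ minus a lower-dimensional set lies in $\mathcal{T}(h)$ and then take closures since $\mathcal{T}(h)$ is closed. Handling this boundary-of-boundary case cleanly is the main obstacle; everything else is bookkeeping with the definitions.
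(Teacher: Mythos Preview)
Your argument for part~(ii) is fine and matches the paper's; the degenerate case you flag (shared multiindex between the maximizing monomials of $f$ and $g+\gamma$) is a genuine edge case, but outside of it the two-monomials-attain-the-max check from Definition~\ref{def:trophype} goes through exactly as you wrote.

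Part~(i), however, has a real gap. The claim ``each connected component of $U_+$ must lie within a single linear region of $f$'' is false. Take $d=1$, $f(x)=0\oplus x=\max\{0,x\}$ (two linear regions), $g(x)=-10$, $\gamma=0$: then $U_+=\{x:\max\{0,x\}>-10\}=\real$, a single connected component crossing both linear regions of $f$. Your justification---that on $U_+$ one has $h=f$ and hence ``the specific linear form is fixed''---conflates $h=f$ with $h$ being \emph{affine}; on $U_+$ the function $h$ equals $f$, but $f$ is still genuinely piecewise linear there and nothing prevents a component of $U_+$ from straddling $\mathcal{T}(f)$.

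The argument that works runs in the opposite direction: rather than trapping each component inside one linear region of $f$, show that each linear region of $f$ meets $U_+$ in a connected set. Let $R_1,\dots,R_{\R(f)}$ be the (convex) linear regions of $f$, with $f=\ell_i$ affine on $R_i$. Then
\[
R_i\cap U_+ \;=\; R_i\cap\{x:\ell_i(x)-g(x)>\gamma\},
\]
and since $g$ is convex (tropical polynomial) the function $\ell_i-g$ is concave, so its strict superlevel set is convex; intersecting with the convex $R_i$ keeps it convex, hence connected. Thus $U_+=\bigcup_i(R_i\cap U_+)$ is a union of at most $\R(f)$ connected pieces, which bounds the number of connected components of $U_+$ by $\R(f)$. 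The bound for negative regions is symmetric with the roles of $f$ and $g$ swapped. This convexity step is the missing idea.
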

The function $s^{-1}(c)\odot g \oplus f$ is not necessarily linear on every positive or negative region and so its tropical hypersurface $\mathcal{T} (s^{-1}(c)\odot g \oplus f)$ may further divide a positive or negative region derived from $\mathcal{B}$ into multiple linear regions.
Hence the ``$\subseteq$'' in \eqref{eq:decbd} cannot in general be replaced by ``$=$''.

\subsection{Zonotopes as geometric building blocks of neural networks}\label{sec:zono}

From Section~\ref{sec:hyper}, we know that the number of regions a tropical hypersurface $\mathcal{T}(f)$ divides the space into  equals the number of vertices in the dual subdivision of the Newton polygon associated with the tropical polynomial $f$. This allows us to bound the number of linear regions of a neural network by bounding the number of vertices in the dual subdivision of the Newton polygon.

We start by examining how geometry changes from one layer to the next  in a neural network, more precisely:
\begin{question}
How are the tropical hypersurfaces of the tropical polynomials in the $(l+1)$th layer of a neural network related to those  in the $l$th layer?
\end{question}
The recurrent relation \eqref{eq:tropical-recursion} describes how the tropical polynomials occurring in the $(l+1)$th layer are obtained from those
in the $l$th layer, namely, via three operations: tropical sum, tropical product, and tropical powers. Recall  that a tropical hypersurface of a tropical polynomial is dual to the dual subdivision of the Newton polytope of the tropical polynomial, which is given by the projection of the upper faces on the polytopes defined by \eqref{eq:polytope-F}.
Hence the question boils down to how these three operations transform the polytopes, which is  addressed in Propositions~\ref{prop:polytope-exp} and \ref{prop:polytope-ops}. We follow notations in  Proposition~\ref{prop:representation2} for the next result.
\begin{lemma}
\label{lemma:polytopes}
Let $f^{(l)}_i$, $g^{(l)}_i$, $h^{(l)}_i$ be the tropical polynomials produced by the $i$th node in the $l$th layer of a neural network, i.e., they are defined by \eqref{eq:tropical-recursion}. 
Then  $\cP \bigl(f^{(l)}_i\bigr)$, $\cP \bigl(g^{(l)}_i\bigr)$, $\cP \bigl(h^{(l)}_i\bigr)$ are subsets of $\real^{d+1}$ given as follows:
\begin{enumerate}[\upshape (i), topsep=0ex, itemsep=0ex]
\item $\cP \bigl( g^{(1)}_i \bigr)$ and  $\cP \bigl( h^{(1)}_i \bigr)$ are points.
\item $\cP \bigl( f^{(1)}_i \bigr)$ is a line segment.
\item\label{zono} $\cP \bigl( g^{(2)}_i \bigr)$ and $\cP \bigl( h^{(2)}_i \bigr)$ are zonotopes.
\item\label{conv} For $l \geq 1$,
\[
\cP \bigl( f^{(l)}_i \bigr) = 
\operatorname{Conv}\bigl[\cP \bigl( g^{(l)}_i  \odot t_i^{(l)} \bigr) \cup \cP \bigl( h^{(l)}_i\bigr) \bigr]
\]
if $t_i^{(l)}  \in \mathbb{R}$, and $\cP \bigl( f^{(l)}_i \bigr) = \cP \bigl( h^{(l)}_i \bigr)$ if $ t_i^{(l)} = -\infty$.
\item For $l \geq 1$,  $\cP \bigl( g^{(l+1)}_i \bigr)$ and $\cP \bigl( h^{(l+1)}_i \bigr)$ are  weighted Minkowski sums,
\begin{align*}
\cP \bigl( g^{(l+1)}_i \bigr) &= \sum_{j=1}^{n_l} a_{ij}^{-} \cP \bigl( f^{(l)}_{j} \bigr)
+ \sum_{j=1}^{n_l} a_{ij}^{+} \cP \bigl( g^{(l)}_{j} \bigr),\\
\cP \bigl( h^{(l+1)}_i \bigr) &= \sum_{j=1}^{n_l} a_{ij}^{+} \cP \bigl( f^{(l)}_{j} \bigr)
                                    + \sum_{j=1}^{n_l} a_{ij}^{-} \cP \bigl( g^{(l)}_{j} \bigr) \\
&\qquad + \{ b_i e \},
\end{align*}
where $a_{ij}$, $b_i$ are entries of the weight matrix $A^{(l+1)} \in \mathbb{Z}^{n_{l+1} \times n_l}$ and  bias vector $b^{(l+1)} \in \mathbb{R}^{n_{l+1}}$, and $e \coloneqq (0,\dots,0,1) \in \mathbb{R}^{d+1}$.
\end{enumerate}
\end{lemma}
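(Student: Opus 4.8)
The plan is to prove each item in turn by induction on $l$, using at each step the recurrence \eqref{eq:tropical-recursion} together with Propositions~\ref{prop:polytope-exp} and~\ref{prop:polytope-ops}, which tell us exactly how $\cP(\cdot)$ transforms under tropical power ($\cP(f^a) = a\cP(f)$), tropical product ($\cP(f\odot g) = \cP(f)+\cP(g)$, a Minkowski sum), and tropical sum ($\cP(f\oplus g) = \operatorname{Conv}(\cV(\cP(f))\cup\cV(\cP(g)))$). The base case $l=1$ is direct: since $\nu^{(0)}(x)=x$, we have $f^{(0)}_j = x_j$ (so $\cP(f^{(0)}_j)$ is the single point $(e_j,0)\in\real^{d+1}$) and $g^{(0)}_j = h^{(0)}_j = 0$ (so $\cP(g^{(0)}_j)=\cP(h^{(0)}_j)=\{0\}$, the origin). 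Plugging these into \eqref{eq:tropical-recursion} for $l=0$: $g^{(1)}_i = \bigodot_j (x_j)^{a^-_{ij}}$, so $\cP(g^{(1)}_i) = \sum_j a^-_{ij}(e_j,0)$ is a single point; similarly $h^{(1)}_i = \bigodot_j (x_j)^{a^+_{ij}}\odot b_i$ gives the single point $\bigl(\sum_j a^+_{ij} e_j,\ b_i\bigr)$, proving (i). For (ii), $f^{(1)}_i = h^{(1)}_i \oplus (g^{(1)}_i\odot t^{(1)}_i)$ is a tropical sum of two monomials, hence $\cP(f^{(1)}_i)$ is the convex hull of two points — a line segment (possibly degenerate, which is fine).

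For item (iii), I would apply the last item's formula at $l=1$: $\cP(g^{(2)}_i) = \sum_j a^-_{ij}\cP(f^{(1)}_j) + \sum_j a^+_{ij}\cP(g^{(1)}_j)$. By (ii) each $a^-_{ij}\cP(f^{(1)}_j)$ is a scaled line segment, and by (i) each $a^+_{ij}\cP(g^{(1)}_j)$ is a point; a Minkowski sum of line segments and points is, by definition, a zonotope (translated). Same for $h^{(2)}_i$. For item (iv), this is just the observation that $\cP(f^{(l)}_i)$ comes from applying $\cP$ to $f^{(l)}_i = h^{(l)}_i \oplus (g^{(l)}_i\odot t^{(l)}_i)$: when $t^{(l)}_i\in\real$ this is a genuine tropical sum, so the second line of Proposition~\ref{prop:polytope-ops} gives $\cP(f^{(l)}_i) = \operatorname{Conv}(\cV(\cP(h^{(l)}_i))\cup\cV(\cP(g^{(l)}_i\odot t^{(l)}_i)))$, and since taking convex hull of vertex sets equals convex hull of the polytopes themselves, this is $\operatorname{Conv}[\cP(g^{(l)}_i\odot t^{(l)}_i)\cup\cP(h^{(l)}_i)]$; when $t^{(l)}_i = -\infty$, the term $g^{(l)}_i\odot t^{(l)}_i$ tropically vanishes (coefficient $-\infty$, excluded from the Newton polytope construction by \eqref{eq:polytope-F}), so $f^{(l)}_i = h^{(l)}_i$ and the polytopes coincide. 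Finally, item (v): apply $\cP$ to the recurrence for $g^{(l+1)}_i = \bigl[\bigodot_j (f^{(l)}_j)^{a^-_{ij}}\bigr]\odot\bigl[\bigodot_j (g^{(l)}_j)^{a^+_{ij}}\bigr]$; tropical product becomes Minkowski sum (Proposition~\ref{prop:polytope-ops}) and each tropical power becomes a scaling (Proposition~\ref{prop:polytope-exp}), yielding $\cP(g^{(l+1)}_i) = \sum_j a^-_{ij}\cP(f^{(l)}_j) + \sum_j a^+_{ij}\cP(g^{(l)}_j)$. The extra $\odot b_i$ in $h^{(l+1)}_i$ contributes $\cP(b_i) = \{b_i e\}$ as an extra Minkowski summand, accounting for the $\{b_i e\}$ term.

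The one genuinely delicate point — and the main thing to get right rather than a real obstacle — is bookkeeping the indexing: the recurrence \eqref{eq:tropical-recursion} as written is "one step ahead," expressing layer $l+1$ quantities in terms of layer $l$ ones, and the "$F^{(0)}, G^{(0)}$" seeding (i.e.\ $\nu^{(0)}(x)=x$) must be spelled out explicitly so that the $l=1$ computations are unambiguous. One must also be careful that the nonnegativity $a^\pm_{ij}\in\mathbb{N}$ (from assumption~\ref{ass1} and the $A=A_+-A_-$ splitting) is exactly what makes the scalings $a^\pm_{ij}\cP(\cdot)$ and the weighted Minkowski sums well-defined with nonnegative coefficients — this is where the integer-weight hypothesis is actually used. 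Beyond that, each item is a one-line application of an already-established proposition, so the proof is essentially a structured unwinding of the recurrence; I would present it as a short induction with the five items verified in the order (i), (ii) at the base, then (iv) and (v) as the inductive step, with (iii) as the $l=1$ instance of (v) combined with (i) and (ii).
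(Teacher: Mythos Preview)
Your proposal is correct and follows essentially the same approach as the paper: each item is obtained by applying Propositions~\ref{prop:polytope-exp} and~\ref{prop:polytope-ops} (the transformation rules for $\cP$ under tropical power, product, and sum) directly to the recurrence~\eqref{eq:tropical-recursion}, with the base case seeded by $f^{(0)}_j=x_j$, $g^{(0)}_j=0$ from $\nu^{(0)}(x)=x$. Your remarks on the $t^{(l)}_i=-\infty$ case and on the role of $a^\pm_{ij}\in\mathbb{N}$ in making the weighted Minkowski sums well-defined are exactly the right points of care.
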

A conclusion of Lemma~\ref{lemma:polytopes} is that zonotopes are the building blocks in the tropical geometry of neural networks.
Zonotopes are studied extensively in convex geometry and, among other things, are intimately related to hyperplane arrangements \cite{greene1983interpretation, guibas2003zonotopes, mcmullen1971zonotopes, holtz2011zonotopal}. Lemma~\ref{lemma:polytopes} connects neural networks to this extensive body of work but its full implication remains to be explored.  
In Section~\ref{sec:suppl-poly-of-nn-eg} of the supplement, we show how one may build these polytopes for a two-layer neural network.

\subsection{Geometric complexity of deep neural networks}\label{sec:bounds}

We apply the tools in Section~\ref{sec:hyper} to study the complexity of a neural network, showing that a deep network is much more expressive than a shallow one.
Our measure of complexity is geometric: we will follow \cite{montufar2014number, RaghuPKGS17} and use the number of linear regions of a piecewise linear function $\nu : \mathbb{R}^d \to \mathbb{R}^p$ to measure the complexity of $\nu$.

We would like to emphasize that our upper bound below does not improve on that obtained in \cite{RaghuPKGS17} --- in fact, our version is more restrictive given that it applies only to neural networks satisfying \ref{ass1}--\ref{ass3}. Nevertheless our goal here is to demonstrate how tropical geometry may be used to derive the same bound.
\begin{theorem}\label{thm:main-bound}
Let $\nu : \mathbb{R}^d \to \mathbb{R}$ be an $L$-layer real-valued feedforward neural network satisfying  \ref{ass1}--\ref{ass3}. Let $t^{(L)} = -\infty$ and  $n_{l} \geq d$ for all $l=1,\dots,L-1$.  Then $\nu=\nu^{(L)}$ has at most
\[
\prod_{l=1}^{L-1} \sum_{i=0}^{d} \binom{n_l}{i}
\]
linear regions.
In particular, if $d \le n_{1}, \dots, n_{L-1} \le n$, the number of  linear regions of $\nu$ is bounded by $\mathcal{O} \bigl( {n}^{d(L-1)} \bigr)$.
\end{theorem}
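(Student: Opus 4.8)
The plan is to run an induction on the layer index, applied not to $\nu^{(l)}$ itself but to the right auxiliary object. The linear regions of a tropical rational map are generally nonconvex (as noted after Definition~\ref{def:lr}), which makes them awkward to subdivide, so instead I would track the linear regions of the tropical polynomial map $\Phi^{(l)} \coloneqq (F^{(l)}, G^{(l)}) : \mathbb{R}^d \to \mathbb{R}^{2n_l}$ furnished by Proposition~\ref{prop:representation2}; each of these is a convex polyhedron, since every coordinate of $F^{(l)}$ and $G^{(l)}$ is a genuine tropical polynomial. Write $N_l$ for the number of linear regions of $\Phi^{(l)}$. Because $t^{(L)} = -\infty$ the last activation is the identity, so $\nu = \nu^{(L)} = A^{(L)}\bigl(F^{(L-1)} - G^{(L-1)}\bigr) + b^{(L)}$ is affine on each linear region of $\Phi^{(L-1)}$; hence $\R(\nu) \le N_{L-1}$, and it suffices to prove $N_l \le \prod_{k=1}^{l} \sum_{i=0}^{d}\binom{n_k}{i}$ for $1 \le l \le L-1$.

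For the base case $l = 1$, I would read off from the recurrence \eqref{eq:tropical-recursion} that $g^{(1)}_i = \bigodot_{j} x_j^{a_{ij}^-}$ and $h^{(1)}_i = b_i \odot \bigodot_{j} x_j^{a_{ij}^+}$ are globally affine (consistent with $\cP(g^{(1)}_i)$ and $\cP(h^{(1)}_i)$ being points in Lemma~\ref{lemma:polytopes}), while $f^{(1)}_i = h^{(1)}_i \oplus (g^{(1)}_i \odot t^{(1)}_i)$ is a maximum of two affine functions and so bends across at most one hyperplane of $\mathbb{R}^d$. Thus $\Phi^{(1)}$ is affine on each cell of an arrangement of at most $n_1$ hyperplanes in $\mathbb{R}^d$, of which there are at most $\sum_{i=0}^{d}\binom{n_1}{i}$; equivalently --- the tropical reading --- these cells correspond to the vertices on the upper faces of the zonotope $\cP\bigl(\bigodot_i f^{(1)}_i\bigr) = \sum_i \cP(f^{(1)}_i)$, a Minkowski sum of $n_1$ segments in $\mathbb{R}^{d+1}$, whose count is bounded by Corollary~\ref{cor:num-vert-on-uf}.

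For the inductive step I would fix a linear region $R$ of $\Phi^{(l-1)}$, on which all $f^{(l-1)}_j$ and $g^{(l-1)}_j$ are affine. By \eqref{eq:tropical-recursion} --- equivalently the weighted Minkowski-sum formulas of Lemma~\ref{lemma:polytopes}, which express $\cP(g^{(l)}_i)$ and $\cP(h^{(l)}_i)$ through the $\cP(f^{(l-1)}_j)$ and $\cP(g^{(l-1)}_j)$ --- each of $g^{(l)}_i$ and $h^{(l)}_i$ is a tropical product of nonnegative integer powers of the $f^{(l-1)}_j$ and $g^{(l-1)}_j$, i.e.\ a nonnegative integer linear combination of them in ordinary arithmetic, hence affine on $R$; therefore $f^{(l)}_i = h^{(l)}_i \oplus (g^{(l)}_i \odot t^{(l)}_i)$ bends across at most one hyperplane $H^R_i \subseteq \mathbb{R}^d$ inside $R$. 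Since $R$ is convex, the arrangement $\{H^R_1, \dots, H^R_{n_l}\}$ splits $R$ into at most $\sum_{i=0}^{d}\binom{n_l}{i}$ convex pieces (again a zonotope vertex count via Corollary~\ref{cor:num-vert-on-uf}), and $\Phi^{(l)}$ is affine on each such piece, so each piece lies in a single linear region of $\Phi^{(l)}$. Summing over the at most $N_{l-1}$ regions $R$ yields $N_l \le N_{l-1}\sum_{i=0}^{d}\binom{n_l}{i}$, which closes the induction; the $\mathcal{O}\bigl(n^{d(L-1)}\bigr)$ estimate then follows from $\sum_{i=0}^{d}\binom{n_l}{i} \le \sum_{i=0}^{d}\binom{n}{i} = \mathcal{O}(n^d)$ for fixed $d$, multiplied across the $L-1$ factors.

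The step I expect to be the genuine obstacle is the reduction in the first paragraph: the hyperplane-cutting argument needs the cells it refines to be convex, but the linear regions of the tropical \emph{rational} map $\nu^{(l)} = F^{(l)} - G^{(l)}$ are not, so one is forced to carry the larger, convex-celled tropical \emph{polynomial} map $\Phi^{(l)}$ throughout and absorb the harmless coarsening loss $\R(\nu) \le N_{L-1}$. A secondary point to verify is that the per-layer factor is always a count of regions of $n_l$ hyperplanes in the \emph{input} space $\mathbb{R}^d$ rather than in $\mathbb{R}^{n_l}$, which holds because $g^{(l)}_i$ and $h^{(l)}_i$ are functions of $x \in \mathbb{R}^d$; incidentally this is why the hypothesis $n_l \ge d$, needed only for the companion lower bound, plays no role in this upper bound. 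Everything else --- the recurrence \eqref{eq:tropical-recursion}, the Minkowski-sum identities, and the zonotope vertex bound --- is quoted directly from Proposition~\ref{prop:representation2}, Lemma~\ref{lemma:polytopes}, and Corollary~\ref{cor:num-vert-on-uf}.
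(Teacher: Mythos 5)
Your proof is correct and takes essentially the same route as the paper's: an induction over layers in which each linear region accumulated through layer $l-1$ is subdivided by the at most $n_l$ affine ``bends'' that $f^{(l)}_i = h^{(l)}_i \oplus (g^{(l)}_i \odot t^{(l)}_i)$ introduces via \eqref{eq:tropical-recursion}, with the per-layer factor $\sum_{i=0}^{d}\binom{n_l}{i}$ supplied by the zonotope/hyperplane-arrangement count of Corollary~\ref{cor:num-vert-on-uf}. Your device of carrying the tropical polynomial map $(F^{(l)},G^{(l)})$ so that the cells being refined remain convex is the right bookkeeping for making the induction airtight and is consistent with the paper's treatment.
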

\begin{proof}
If $L=2$, this follows directly from Lemma~\ref{lemma:polytopes} and Corollary~\ref{cor:num-vert-on-uf}.
The case of $L \ge 3$ is in Section~\ref{prof:main-bound} in the supplement.
\end{proof}

As was pointed out in \citep{RaghuPKGS17}, this upper bound closely matches the lower bound $\Omega\bigl((n / d)^{(L-1)d} n^d\bigr)$ in \citep[Corollary 5]{montufar2014number} when $ n_{1} = \dots = n_{L-1} = n \ge d$.
Hence we surmise that the number of linear regions of the neural network grows polynomially with the width $n$ and exponentially with the number of layers $L$.

\section{Conclusion}

We argue that feedforward neural networks with rectified linear units are, modulo trivialities, nothing more than tropical rational maps. To understand them we often just need to understand the relevant tropical geometry.

In this article, we took a first step to provide a proof-of-concept: questions regarding decision boundaries, linear regions, how depth affect expressiveness, etc, can be translated into questions involving tropical hypersurfaces, dual subdivision of Newton polygon, polytopes constructed from zonotopes, etc.

As a new branch of algebraic geometry, the novelty of tropical geometry stems from both the algebra and geometry as well as the interplay between them. It has connections to many other areas of mathematics.
Among other things, there is a tropical analogue of linear algebra \citep{butkovivc2010max} and a tropical analogue of convex geometry \citep{gaubert2006max}. We cannot emphasize enough that we have only touched on a small part of this rich subject. We hope that further investigation from this tropical angle might perhaps unravel other mysteries of deep neural networks.

\section*{Acknowledgments}
The authors  thank Ralph Morrison, Yang Qi, Bernd Sturmfels, and the anonymous referees for their very helpful comments.  The work in this article is generously supported by DARPA D15AP00109, NSF IIS 1546413, the Eckhardt Faculty Fund, and  a DARPA Director's Fellowship.

\bibliographystyle{icml2018}

\iftoggle{includeSuppl}{
\providecommand{\includeSuppl}{true}
\onecolumn
\icmltitle{Supplementary Material: Tropical Geometry of Deep Neural Networks}

\appendix
\counterwithin{figure}{section}

\section{Illustration of our neural network}\label{sec:suppl-nn-fig}

Figure~\ref{fig:nn} summarizes the architecture and notations of the feedforward neural network discussed in this paper.

\begin{figure*}[h!]
\centering
	\begin{adjustbox}{width=\textwidth}
\input{figure4_new.tikz}
\end{adjustbox}
\centering
\caption{General form of an ReLU feedforward neural network $\nu:\real^d \to \real^p$ with $L$ layers.}
\label{fig:nn}
\end{figure*}

\section{Tropical power} \label{sec:suppl-trop-alg}

As in Section~\ref{sec:trop}, we write $x^a = x^{\odot a}$; aside from this slight abuse of notation, $\oplus$ and $\odot$ denote tropical sum and product, $+$ and $\cdot$ denote standard sum and product in all other contexts. Tropical power evidently has the following properties:
\begin{itemize}[topsep=0ex, itemsep = 0ex]
\item For $x,y \in \real$ and $a \in \mathbb{R}$, $a \geq 0$, 
\[
( x \oplus y )^a = x^a \oplus y^a \quad \text{and} \quad ( x \odot y )^a = x^a \odot y^a.
\]
If $a$ is allowed negative values, then we lose the first property. In general $( x \oplus y )^a \neq x^a \oplus y^a $ for $a<0$.

\item For $x \in \real$, 
\[
x^0 = 0.
\]
\item For $x \in \real$ and $a, b \in \mathbb{N}$,
\[
( x^a )^{ b }  = x^{a \cdot b}.
\]
\item For $x \in \real$ and $a, b \in \mathbb{Z}$,
\[
x^a \odot x^{ b } = x^{a + b}.
\]
\item For $x \in \real$ and $a, b \in \mathbb{Z}$,
\[
x^a \oplus x^{ b } = x^a \odot ( x^{a - b} \oplus 0 ) = x^a \odot ( 0 \oplus x^{a - b} ).
\]
\end{itemize}

\section{Examples}

\subsection{Examples of tropical curves and dual subdivision of Newton polygon} \label{sec:suppl-tropical-eg}

Let $f \in \PL(2,1) = \mathbb{T}[x_1, x_2]$, i.e., a bivariate tropical polynomial. 
It follows from our discussions in Section~\ref{sec:hyper} that the tropical hypersurface $\mathcal{T}(f)$ is a planar graph dual to the dual subdivision $\delta(f)$ in the following sense:
\begin{enumerate}[\upshape (i), topsep=0ex, itemsep=0ex]
\item Each two-dimensional face in $\delta(f)$ corresponds to a vertex in $\mathcal{T}(f)$.
\item Each one-dimensional edge of a face in $\delta(f)$ corresponds to an edge in $\mathcal{T}(f)$.  
In particular, an edge from the Newton polygon $\Delta (f)$ corresponds to an unbounded edge in $\mathcal{T}(f)$ while other edges correspond to bounded edges.
\end{enumerate}

Figure~\ref{fig:polytope-subdivision} illustrates how we may find the dual subdivision for the tropical polynomial $f(x_1, x_2) = 1 \odot x_1^2  \oplus 1 \odot x_2^2 \oplus 2 \odot x_1 x_2 \oplus 2\odot x_1 \oplus 2\odot x_2 \oplus 2$.
First, find the convex hull 
\begin{align*}
\mathcal{P}(f)=\operatorname{Conv} & \{  (2,0,1),(0,2,1),(1,1,2), (1,0,2),(0,1,2),(0,0,2) \}.
\end{align*}
Then, by projecting the upper envelope of $\mathcal{P}(f)$ to $\real^2$, we obtain $\delta(f)$, the dual subdivision of the Newton polygon.

\subsection{Polytopes of a two-layer neural network} \label{sec:suppl-poly-of-nn-eg}

We illustrate our discussions in Section~\ref{sec:zono} with a two-layer example. Let $\nu : \mathbb{R}^2 \to \mathbb{R}$ be with $n_0 = 2$ input nodes, $n_1 =5$ nodes in the first layer, and $n_2 = 1$ nodes in the output:
\begin{align*}
y =\nu^{(1)}(x) & = \max \left\{ \begin{bmatrix*}[r]
-1 & 1 \\ 1 & -3 \\ 1 & 2 \\ -4 & 1 \\ 3 & 2
\end{bmatrix*}
\begin{bmatrix}
x_1 \\ x_2
\end{bmatrix}
+ \begin{bmatrix*}[r]
1 \\ -1 \\ 2 \\ 0 \\ -2
\end{bmatrix*}, \, 0 \right\}, \\
\nu^{(2)}(y) & = \max \{ y_1 +  2y_2 + y_3 -y_4 -3y_5, \, 0 \}.
\end{align*}
\begin{wrapfigure}[7]{r}[0ex]{0.42\textwidth}
\centering
\includegraphics[width = 0.41\textwidth]{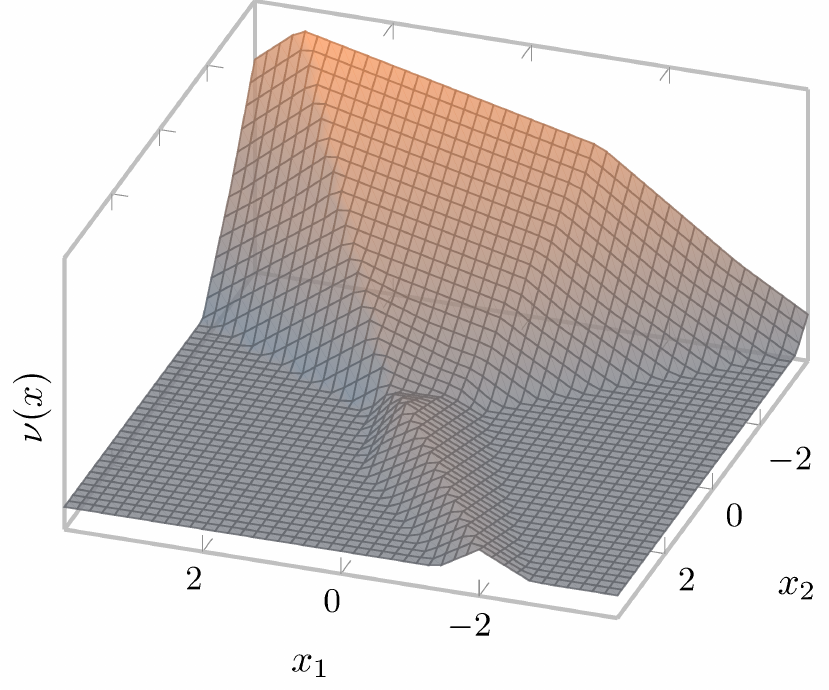}
\end{wrapfigure}
We first express $\nu^{(1)}$ and $\nu^{(2)}$ as tropical rational maps,
\[
\nu^{(1)} = F^{(1)} \oslash G^{(1)}, \quad \nu^{(2)} = f^{(2)} \oslash g^{(2)},
\]
where
\begin{align*}
y &\coloneqq F^{(1)}(x) = H^{(1)}(x) \oplus G^{(1)}(x), \\
z &\coloneqq G^{(1)}(x) =
 \begin{bmatrix}
x_1 \\
x_2^3 \\
0 \\
x_1^4 \\
0
\end{bmatrix}, \qquad
H^{(1)}(x) = 
\begin{bmatrix}
1 \odot x_2 \\
(-1) \odot x_1 \\
2 \odot x_1 x_2^2 \\
x_2 \\
(-2) \odot x_1^3 x_2^2
\end{bmatrix},
\end{align*}
and
\begin{align*}
f^{(2)}(x) &= g^{(2)}(x) \oplus h^{(2)}(x), \\
g^{(2)}(x) &= y_4 \odot y_5^3 \odot z_1 \odot z_2^2 \odot z_3 \\
        &= ( x_2 \oplus x_1^4 ) 
           \odot ((-2) \odot x_1^3 x_2^2 \oplus 0 )^3
           \odot x_1 \odot (x_2^3 )^2, \\
h^{(2)}(x) &= y_1 \odot y_2^2 \odot y_3 \odot z_4 \odot z_5^3 \\
        &= ( 1 \odot x_2 \oplus x_1 ) \odot ( (-1)\odot x_1 \oplus x_2^3 )^2 
        \odot ( 2\odot x_1 x_2^2 \oplus 0 ) \odot x_1^4. 
\end{align*}
We will write $F^{(1)} = (f^{(1)}_1,\dots,f^{(1)}_5)$ and likewise for $G^{(1)}$ and $H^{(1)}$.
The monomials occurring in $g_j^{(1)}(x)$ and $h_j^{(1)}(x)$ are all of the form $cx_1^{a_1}x_2^{a_2}$.
Therefore $\mathcal{P}( g^{(1)}_j)$ and $\mathcal{P}( h^{(1)}_j)$, $j =1,\dots,5$, are points in $\real^3$.

Since $F^{(1)}=G^{(1)} \oplus H^{(1)}$, $\mathcal{P}(f^{(1)}_j)$ is a convex hull of two points, and thus a line segment in $\real^{3}$.
The Newton polygons associated with $f^{(1)}_j$, equal to their dual subdivisions in this case, are obtained by projecting these line segments back to the plane spanned by $a_1, a_2$, as shown on the left in Figure~\ref{fig:eg-f1-g2}.

The line segments $\mathcal{P}(f^{(1)}_j)$,  $j=1,\dots,5$, and points $\mathcal{P}(g^{(1)}_j)$,  $j=1,\dots,5$, serve as building blocks for $\mathcal{P}(h^{(2)})$ and $\mathcal{P}(g^{(2)})$, which are constructed as weighted Minkowski sums:
\begin{align*}
\mathcal{P}(h^{(2)}) &= \mathcal{P}(f^{(1)}_4) + 3\mathcal{P}(f^{(1)}_5) + \mathcal{P}(g^{(1)}_1) + 2\mathcal{P}(g^{(1)}_2) + \mathcal{P}(g^{(1)}_3),  \\
\mathcal{P}(g^{(2)}) &= \mathcal{P}(f^{(1)}_1) + 2\mathcal{P}(f^{(1)}_2) + \mathcal{P}(f^{(1)}_3)  + \mathcal{P}(g^{(1)}_4) + 3\mathcal{P}(g^{(1)}_5).
\end{align*}
$\mathcal{P}(g^{(2)})$ and the dual subdivision of its Newton polygon are shown on the right in Figure~\ref{fig:eg-f1-g2}. $\mathcal{P}(h^{(2)})$ and the dual subdivision of its Newton polygon are shown on the left in Figure~\ref{fig:eg-h2-f2}.
$\mathcal{P}(f^{(2)})$ is the convex hull of the union of $\mathcal{P}(g^{(2)})$ and $\mathcal{P}(h^{(2)})$. The dual subdivision of its Newton polygon is obtained by projecting the upper faces of $\mathcal{P}(f^{(2)})$ to the plane spanned by $a_1, a_2$. These are shown on the right in Figure~\ref{fig:eg-h2-f2}.

\begin{figure}
	\centering
%
\begin{tikzpicture}[thick, scale =0.55, tdplot_main_coords]
	\tikzset{font={\fontsize{7pt}{12}\selectfont}}
	\coordinate (O) at (0,0,0);
	%
	\def\arrA{latex-}
	\def\arrB{-latex}	
	
	%
	\def\a{2}
	\def\b{1.5}
	\def\ra{2.5}
	\def\lw{1.5}
	\coordinate (ha) at (0, 1 * \a, 0);
	\coordinate (hb) at (1 * \a, 0, 0);
	\coordinate (hc) at (1 * \a, 2 * \a, 0);
	\coordinate (hd) at (0, 1 * \a, 0);
	\coordinate (he) at (3 * \a, 2 * \a , 0);
	
	\def\shiftg{3}
	\coordinate (hA) at (0, 1 * \a, 1 * \b + \shiftg * \b);
	\coordinate (hB) at (1 * \a, 0, -1 * \b + \shiftg * \b);	
	\coordinate (hC) at (1 * \a, 2 * \a, 2 * \b + \shiftg * \b);
	\coordinate (hD) at (0, 1 * \a,  0 + \shiftg * \b);
	\coordinate (hE) at (3 * \a, 2 * \a , -2 * \b + \shiftg * \b);	
		
	\coordinate (gA) at (1 * \a, 0, \shiftg * \b);
	\coordinate (gB) at (0, 3 * \a, \shiftg * \b);
	\coordinate (gC) at (0 * \a, 0, \shiftg * \b);
	\coordinate (gD) at (4 * \a, 0, \shiftg * \b);
	\coordinate (gE) at (0, 0, \shiftg * \b);
	
	\coordinate (ga) at (1 * \a, 0, 0);
	\coordinate (gb) at (0, 3 * \a, 0);
	\coordinate (gc) at (0, 0, 0);
	\coordinate (gd) at (4 * \a, 0, 0);
	\coordinate (ge) at (0, 0, 0);

	%
	\draw[dashed, red, opacity=0.4] (gA) -- (ga);
	\draw[dashed, blue, opacity=0.4] (gB) -- (gb);
	\draw[dashed, green, opacity=0.4] (gC) -- (gc);
	\draw[dashed, black, opacity=0.4] (gD) -- (gd);
	\draw[dashed, brown, opacity=0.4] (gE) -- (ge);
	
	\draw[dashed, red, opacity=0.4] (hA) -- (ha);
	\draw[dashed, blue, opacity=0.4] (hB) -- (hb);
	\draw[dashed, green, opacity=0.4] (hC) -- (hc);
	\draw[dashed, black, opacity=0.4] (hD) -- (hd);
	\draw[dashed, brown, opacity=0.4] (hE) -- (he);
	
	\fill ($(gA)!0.2!(hA)$) node [above, yshift = -0.1ex] {${\color{red}f^{(1)}_1}$};
	\fill ($(gB)!0.5!(hB)$) node [below, xshift = -0.75ex] {${\color{blue}f^{(1)}_2}$};
	\fill ($(gC)!0.85!(hC)$) node [right] {${\color{green} f^{(1)}_3}$};
	\fill ($(gD)!0.3!(hD)$) node [below] {${\color{black} f^{(1)}_4}$};
	\fill ($(gE)!0.7!(hE)$) node [below] {${\color{brown}f^{(1)}_5}$};

	\draw[red, opacity=0.4] (ga) -- (ha);
	\draw[blue, opacity=0.4] (gb) -- (hb);
	\draw[green, opacity=0.4] (gc) -- (hc);
	\draw[black, opacity=0.4] (gd) -- (hd);
	\draw[brown, opacity=0.4] (ge) -- (he);

	\draw[line width=\lw, red] (gA) -- (hA);
	\draw[line width=\lw, blue] (gB) -- (hB);
	\draw[line width=\lw, green] (gC) -- (hC);
	\draw[line width=\lw, black] (gD) -- (hD);
	\draw[line width=\lw, brown] (gE) -- (hE);
	
	\begin{scope}		
	\fill (gA) circle[radius = \ra pt]node [left] {$g_1^{(1)}$};
	\fill (gB) circle[radius = \ra pt]node [right] {$g_2^{(1)}$};
	\fill (gD) circle[radius = \ra pt]node [left] {$g_4^{(1)}$};
	\fill (gE) circle[radius = \ra pt]node [right] {$g_3^{(1)}\backslash g_5^{(1)}$};
	\end{scope}
	
	\begin{scope}		
	\fill (ga) circle[radius = \ra pt]node [left] {$(1,0)$};
	\fill (gb) circle[radius = \ra pt]node [right] {$(0,3)$};
	\fill (gc) circle[radius = \ra pt]node [above, xshift=-0.2ex] {$(0,0)$};
	\fill (gd) circle[radius = \ra pt]node [left] {$(4,0)$};
	\fill (ge) circle[radius = \ra pt];
	\end{scope}	
	
	\begin{scope}
	\fill (hb) circle[radius = \ra pt]; 
	\fill (hc) circle[radius = \ra pt]; 
	\fill (hd) circle[radius = \ra pt] node [right, xshift = -0.4ex] {$(0,1)$};
	\fill (he) circle[radius = \ra pt] node [left] {$(3,2)$};	
	\end{scope}
	
	\begin{scope}
	\fill (hA) circle[radius = \ra pt] node [above right, yshift = -0.5ex, xshift = -0.5ex] {$h_1^{(1)}$};	
	\fill (hB) circle[radius = \ra pt] node [left] {$h_2^{(1)}$};
	\fill (hC) circle[radius = \ra pt] node [left, yshift=0.7ex, xshift=0.5ex] {$h_3^{(1)}$};
	\fill (hD) circle[radius = \ra pt] node [below right, xshift=-0.4, yshift=0.6ex ] {$h_4^{(1)}$};
	\fill (hE) circle[radius = \ra pt] node [left, yshift = 1ex, xshift = 1ex] {$h_5^{(1)}$};	
	\end{scope}

	%
	\begin{scope}[shift={(0,0.5, 0)}]
	\def\scalaxis{1.5}
	\coordinate (axis_o) at (-2*\a, 0 , 0);
	\coordinate (axis_a1) at (-2*\a + \scalaxis, 0 , 0);
	\coordinate (axis_a2) at (-2*\a, \scalaxis , 0);
	\coordinate (axis_c) at (-2*\a, 0 ,  + \scalaxis);
	\end{scope}
	
	\draw[\arrB] (axis_o) -- (axis_c) node [above right] {$c$};
	\draw[\arrB] (axis_o) -- (axis_a1) node [above right] {$a_1$};
	\draw[\arrB] (axis_o) -- (axis_a2) node [above right] {$a_2$};	
	\end{tikzpicture}	\begin{tikzpicture}[thick, scale = 0.5, tdplot_main_coords]
	\tikzset{font={\fontsize{7pt}{12}\selectfont}}
	\coordinate (O) at (0,0,0);
	
	%
	\def\arrA{latex-}
	\def\arrB{-latex}	
	
	%
	\def\a{0.4}
	\def\b{1}
	\def\ra{3.5}
	\def\s{7.3}
	\def\lw{1.5}
	\coordinate (A) at (10* \a, 13 * \a, -6 * \b +\s);
	\coordinate (B) at (14 * \a, 12 * \a, -6 * \b +\s);
	\coordinate (C) at (1 * \a, 7 * \a, 0 +\s);
	\coordinate (D) at (5 * \a, 6 * \a, 0 +\s);
	
	\coordinate (a) at (10* \a, 13 * \a, 0);
	\coordinate (b) at (14 * \a, 12 * \a, 0);
	\coordinate (c) at (1 * \a, 7 * \a, 0);
	\coordinate (d) at (5 * \a, 6 * \a, 0);
	
	\begin{scope}
	\fill (b) circle[radius = \ra pt] node [left] {$(14, 12)$};
	\fill (c) circle[radius = \ra pt] node [below, xshift =2ex, yshift=0.5ex] {$(1, 7)$};
	\fill (d) circle[radius = \ra pt] node [right, yshift = 1.2ex, xshift = -0.6ex] {$(5, 6)$};
	\end{scope}
	
	\begin{scope}
	\fill (A) circle[radius = \ra pt] node [below right, xshift = 8.5ex, yshift =-2.5ex] {$(-6) \odot x_1^{10}x_2^{13}$};	
	\fill (B) circle[radius = \ra pt] node [above left] {$(-6)\odot x_1^{14}x_2^{12}$};
	\fill (C) circle[radius = \ra pt] node [right] {$x_1x_2^7$};
	\fill (D) circle[radius = \ra pt] node [left] {$x_1^5x_2^6$};
	\end{scope}	
	
	\draw[] (a) -- (b) -- (d) -- (c) -- cycle;
	\fill [step=0.1cm, pattern color=gray, opacity=0.2](a) -- (b) -- (d) -- (c) -- cycle;
	\fill [step=0.1cm, pattern color=blue, pattern = north west lines, opacity=0.15](a) -- (b) -- (d) -- (c) -- cycle;
	
	\draw[line width=\lw] (A) -- (B) -- (D) -- (C) -- cycle;

	\fill (a) circle[radius = \ra pt] node [below] {$(10, 13)$};	
	
	\draw[dashed] (a) -- (A);
	\draw[dashed] (b) -- (B);
	\draw[dashed] (c) -- (C);
	\draw[dashed] (d) -- (D);
	
		\fill[cof,opacity=0.5](A) -- (B) -- (D) -- (C) -- cycle;
	%
	\begin{scope}[shift={(0,0.5, 0)}]
	\def\scalaxis{1.5}
	\coordinate (axis_o) at (-12.5*\a, 0 , 3);
	\coordinate (axis_a1) at (-12.5*\a + \scalaxis, 0 , 3);
	\coordinate (axis_a2) at (-12.5*\a, \scalaxis , 3);
	\coordinate (axis_c) at (-12.5*\a, 0 ,  3+ \scalaxis);
	\end{scope}
	
	\draw[\arrB] (axis_o) -- (axis_c) node [above right] {$c$};
	\draw[\arrB] (axis_o) -- (axis_a1) node [above right] {$a_1$};
	\draw[\arrB] (axis_o) -- (axis_a2) node [above right] {$a_2$};	
	
	\draw[opacity=0.8, densely dashdotted] [\arrA] plot  [smooth] coordinates {(9.8* \a, 13 * \a, -6 * \b +\s)
		 (7* \a, 15 * \a, -7 * \b +\s)  (5* \a, 15 * \a, -6.7 * \b +\s) (3* \a, 15 * \a, -7.3 * \b +\s)
	};

	\end{tikzpicture}
\vspace*{-2ex}
\caption{Left: $\mathcal{P}(F^{(1)})$ and dual subdivision of $F^{(1)}$. Right: $\mathcal{P}(g^{(2)})$ and dual subdivision of $g^{(2)}$. 
In both figures, dual subdivisions have been translated along the $-c$ direction (downwards) and separated from the polytopes for visibility.}
\label{fig:eg-f1-g2}
\end{figure}
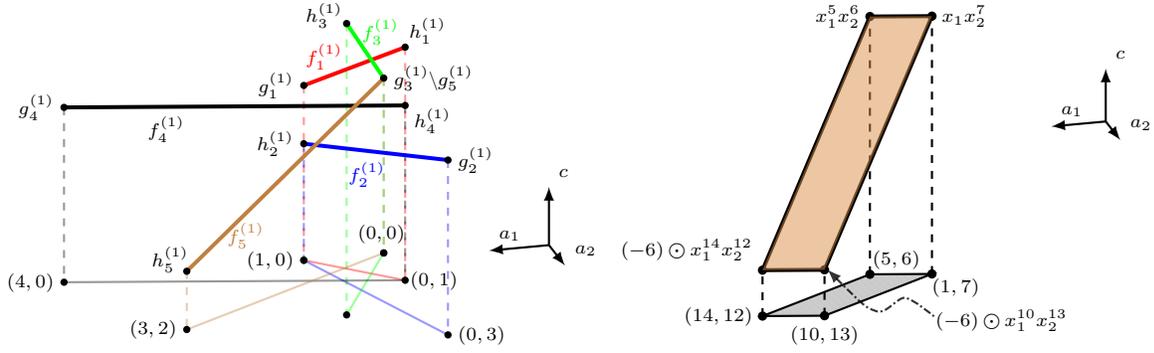
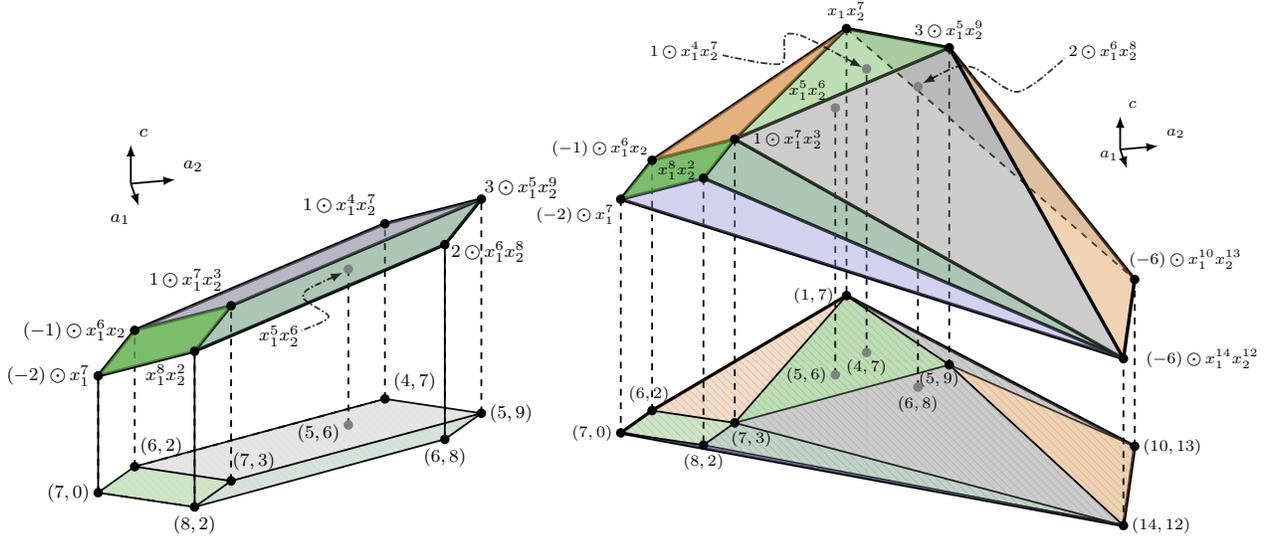
\begin{figure}
	\begin{adjustbox}{width=\textwidth}
	\centering
		\tdplotsetmaincoords{60}{80}
	\begin{tikzpicture}[thick,  scale = 0.7, tdplot_main_coords]
	\tikzset{font={\fontsize{8pt}{12}\selectfont}}
	\coordinate (O) at (0,0,0);
	%
	\def\arrA{latex-}
	\def\arrB{-latex}	
	\def\lw{1.5}
	%
	\def\a{1}
	\def\b{0.5}
	\def\ra{3}
	\def\s{4}
	\coordinate (A) at (7* \a, 3 * \a, 1 * \b + \s) ;
	\coordinate (B) at (8 * \a, 2 * \a, 0 * \b+ \s);
	\coordinate (C) at (5 * \a, 9 * \a, 3 * \b+ \s);
	\coordinate (D) at (6 * \a, 8 * \a, 2 * \b+ \s);	
	\coordinate (E) at (6 * \a, 1 * \a, -1 * \b+ \s);
	\coordinate (F) at (7 * \a, 0 * \a, -2 * \b+ \s);
	\coordinate (G) at (4 * \a, 7 * \a, 1 * \b+ \s);				
	\coordinate (H) at (5 * \a, 6 * \a, 0 * \b+ \s);

	\coordinate (a) at (7* \a, 3 * \a, 0 * \b);
	\coordinate (b) at (8 * \a, 2 * \a, 0 * \b);
	\coordinate (c) at (5 * \a, 9 * \a, 0 * \b);
	\coordinate (d) at (6 * \a, 8 * \a, 0 * \b);	
	\coordinate (e) at (6 * \a, 1 * \a, 0 * \b);
	\coordinate (f) at (7 * \a, 0 * \a, 0 * \b);
	\coordinate (g) at (4 * \a, 7 * \a, 0 * \b);				
	\coordinate (h) at (5 * \a, 6 * \a, 0 * \b);

	\draw (g) -- (c) -- (d) -- (b)  -- (f) -- (e) -- (g) -- cycle;	
	\fill [step=0.1cm, pattern color=gray, pattern = north west lines, opacity=0.25](g) -- (c) -- (d) -- (b)  -- (f) -- (e) -- (g) -- cycle;
	\fill[greet,opacity=0.2](a) -- (b) -- (d) -- (c)  -- cycle;
	\fill[greeo,opacity=0.3](a) -- (b) -- (f) -- (e)  -- cycle;
	\fill[gray,opacity=0.2](a) -- (e) -- (g) -- (c)  -- cycle;
	
	\draw[dashed] (a) -- (A);
	\draw[dashed] (b) -- (B);
	\draw[dashed] (c) -- (C);
	\draw[dashed] (d) -- (D);
		
	\draw[dashed] (e) -- (E);
	\draw[dashed] (f) -- (F);
	\draw[dashed] (g) -- (G);
	\draw[dashed] (h) -- (H);

	\draw[line width=\lw] (A) -- (C);
	\draw[line width=\lw] (A) -- (B);
	\draw[line width=\lw] (A) -- (E);
	\draw[line width=\lw] (E) -- (F);
	\draw[line width=\lw] (F) -- (B);
	\draw[line width=\lw] (C) -- (D);
	\draw[line width=\lw] (B) -- (D);
	\draw[] (B) -- (b);
	\draw[] (F) -- (f);
	\draw[] (D) -- (d);
	\draw[line width=\lw] (C) -- (G);
	\draw[line width=\lw] (E) -- (G);

\fill[lav,opacity=0.3](A) -- (C) -- (G) -- (E)  -- cycle;
\fill[gray,opacity=0.4](A) -- (E) -- (G) -- (C)  -- cycle;
\fill[greeo,opacity=0.8](A) -- (B) -- (F) -- (E)  -- cycle;
\fill[greet,opacity=0.4](A) -- (B) -- (D) -- (C)  -- cycle;
	
\draw[] (a) -- (c);
\draw[] (a) -- (b);
\draw[] (a) -- (e);

\draw[dashed] (b) -- (f);

\draw[dashed] (e) -- (g);

	\begin{scope}
	\fill (a) circle[radius = \ra pt] node [above right, xshift = -0.7ex, yshift = 0.2ex] {$(7, 3)$};		
	\fill (b) circle[radius = \ra pt] node [below] {$(8, 2)$};			
	\fill (c) circle[radius = \ra pt] node [right] {$(5, 9)$};
	\fill (d) circle[radius = \ra pt] node [below] {$(6, 8)$};
	\fill (e) circle[radius = \ra pt] node [above right, yshift=0.3ex, xshift = -0.3ex] {$(6, 2)$};
	\fill (f) circle[radius = \ra pt] node [left] {$(7, 0)$};
	\fill (g) circle[radius = \ra pt] node [above right] {$(4, 7)$};
	\fill[gray] (h) circle[radius = \ra pt];
	\fill (h) node [left, yshift = -0.8ex] {$(5, 6)$};
	\end{scope}
	
	\begin{scope}
	\fill (A) circle[radius = \ra pt] node [above left, yshift = 0.7ex] {$1\odot x_1^7x_2^3$};	
	\fill (B) circle[radius = \ra pt] node [below left, yshift =-0.5ex] {$x_1^8x_2^2$};
	\fill (C) circle[radius = \ra pt] node [above right, yshift =-0.7ex, xshift=-0.5ex] {$3\odot x_1^5x_2^9$};
	\fill (D) circle[radius = \ra pt] node [right, yshift=-0.7ex, xshift = -0.2ex] {$2\odot x_1^6x_2^8$};
	\fill (E) circle[radius = \ra pt] node [left] {$(-1)\odot x_1^6x_2$};	
	\fill (F) circle[radius = \ra pt] node [left] {$(-2) \odot x_1^7$};
	\fill (G) circle[radius = \ra pt] node [above left] {$1\odot x_1^4x_2^7$};
	\fill[gray] (H) circle[radius = \ra pt];
	\fill (H) node [below, yshift = -4.7ex, xshift = -6.8ex] {$x_1^5 x_2^6$};
	\end{scope}
	
	\draw[opacity=0.8, densely dashdotted] [\arrA] plot  [smooth] coordinates {(5.13 * \a, 5.85 * \a, 0 * \b+ \s)
		 (6 * \a, 4.8 * \a, 0 * \b+ \s)
	 (7 * \a, 5.2 * \a, 0 * \b+ \s) (7.5 * \a, 4.5 * \a, 0 * \b+ \s)
	};
	%
	\begin{scope}[shift={(0,0.5, 0)}]
	\def\scalaxis{1}
	\coordinate (axis_o) at (-3*\a, 2 , 2);
	\coordinate (axis_a1) at (-3*\a + \scalaxis, 2 , 2);
	\coordinate (axis_a2) at (-3*\a, 2+ \scalaxis , 2);
	\coordinate (axis_c) at (-3*\a, 2 , 2 + \scalaxis);
	\end{scope}
	
	\draw[\arrB] (axis_o) -- (axis_c) node [above right] {$c$};
	\draw[\arrB] (axis_o) -- (axis_a1) node [below left] {$a_1$};
	\draw[\arrB] (axis_o) -- (axis_a2) node [above right] {$a_2$};	
	\end{tikzpicture}
	\hspace*{-7ex}
%
\tdplotsetmaincoords{60}{80}
\begin{tikzpicture}[thick, scale = 0.6,  tdplot_main_coords]
\tikzset{font={\fontsize{7pt}{12}\selectfont}}
\coordinate (O) at (0,0,0);
%
\def\arrA{latex-}
\def\arrB{-latex}	

%
\def\a{1}
\def\b{0.5}
\def\ra{3.5}
\def\s{8}
\def\lw{1.5}
\coordinate (A) at (7* \a, 3 * \a, 1 * \b + \s) ;
\coordinate (B) at (8 * \a, 2 * \a, 0 * \b+ \s);
\coordinate (C) at (5 * \a, 9 * \a, 3 * \b+ \s);
\coordinate (D) at (6 * \a, 8 * \a, 2 * \b+ \s);	
\coordinate (E) at (6 * \a, 1 * \a, -1 * \b+ \s);
\coordinate (F) at (7 * \a, 0 * \a, -2 * \b+ \s);
\coordinate (G) at (4 * \a, 7 * \a, 1 * \b+ \s);				
\coordinate (H) at (5 * \a, 6 * \a, 0 * \b+ \s);

\coordinate (I) at (10* \a, 13 * \a, -6 * \b +\s);
\coordinate (J) at (14 * \a, 12 * \a, -6 * \b +\s);
\coordinate (K) at (1 * \a, 7 * \a, 0 +\s);
\coordinate (L) at (5 * \a, 6 * \a, 0 +\s);

\coordinate (a) at (7* \a, 3 * \a, 0 * \b);
\coordinate (b) at (8 * \a, 2 * \a, 0 * \b);
\coordinate (c) at (5 * \a, 9 * \a, 0 * \b);
\coordinate (d) at (6 * \a, 8 * \a, 0 * \b);	
\coordinate (e) at (6 * \a, 1 * \a, 0 * \b);
\coordinate (f) at (7 * \a, 0 * \a, 0 * \b);
\coordinate (g) at (4 * \a, 7 * \a, 0 * \b);				
\coordinate (h) at (5 * \a, 6 * \a, 0 * \b);

\coordinate (i) at (10* \a, 13 * \a, 0);
\coordinate (j) at (14 * \a, 12 * \a, 0);
\coordinate (k) at (1 * \a, 7 * \a, 0);
\coordinate (l) at (5 * \a, 6 * \a, 0);	

\fill[gray, opacity=0.25](K) -- (C) -- (I) -- cycle;	

\draw[line width=\lw] (f) -- (e) -- (k) -- (i) -- (j)--  cycle;	
\fill [step=0.1cm, pattern color=gray, pattern = north west lines, opacity=0.5](f) -- (e) -- (k) --  (i) -- (j)  --  cycle;	

\draw[dashed] (K) -- (I);

\draw[] (c) -- (i);	
\draw[] (a) -- (c);
\draw[] (b) -- (j); 
\draw[] (a) -- (j); 

\fill[greeo,opacity=0.3](f) -- (e) -- (a) -- (b)  -- cycle;	
\fill[cof,opacity=0.3](k) -- (a) -- (e) -- cycle;
\fill[lav ,opacity=0.6](f) -- (b) -- (j) -- cycle;
\fill[greet,opacity=0.3](b) -- (a) -- (j)  -- cycle;
\fill[cof, opacity=0.4](c) -- (i) -- (j) -- cycle;
\fill[gray, opacity=0.4](k) -- (c) -- (i) -- cycle;	
\fill[greeo,opacity=0.4](k) -- (a) -- (c) -- cycle;
\fill[gray,opacity=0.4](a) -- (c) -- (j) -- cycle;	

\draw[line width=\lw] (A) -- (K);
\draw[] (a) -- (k);
\draw[line width=\lw] (C) -- (K);
\draw[] (c) -- (k);
\draw[line width=\lw] (E) -- (K);
\draw[] (e) -- (k);

\draw[dashed] (h) -- (H); 
\draw[dashed] (g) -- (G); 
\draw[dashed] (d) -- (D); 

\draw[line width=\lw] (E) -- (A) -- (C);
\draw[line width=\lw] (F) -- (E) -- (A) -- (B) -- cycle;
\draw[] (f) -- (e) -- (a) -- (b) -- cycle;

\draw[dashed] (K) -- (k);
\draw[dashed] (C) -- (c);
\draw[dashed] (A) -- (a);
\draw[dashed] (E) -- (e);
\draw[dashed] (F) -- (f);
\draw[dashed] (B) -- (b);
\draw[dashed] (i) -- (I);

\draw[line width=\lw] (C) -- (I);
\draw[line width=\lw] (B) -- (J); 
\draw[line width=\lw] (A) -- (J); 
\draw[line width=\lw] (F) -- (J); 

\fill[greeo,opacity=0.8](F) -- (E) -- (A) -- (B)  -- cycle;	
\fill[cof,opacity=0.7](K) -- (A) -- (E) -- cycle;

\fill[greet,opacity=0.4](B) -- (A) -- (J)  -- cycle;	
\fill[lav ,opacity=0.4](F) -- (B) -- (J) -- cycle;	
\draw[dashed] (j) -- (J); 
\fill[cof, opacity=0.4](C) -- (I) -- (J) -- cycle;	
\fill[gray,opacity=0.4](A) -- (C) -- (J) -- cycle;	
\draw[line width=\lw] (I) -- (J); 
\draw[line width=\lw] (C) -- (J); 
\draw[] (c) -- (j); 
\fill[greeo,opacity=0.4](K) -- (A) -- (C) -- cycle;

\begin{scope}
\fill (a) circle[radius = \ra pt] node [below right, xshift=-1.2ex] {$(7, 3)$};		
\fill (b) circle[radius = \ra pt] node [below] {$(8, 2)$};			
\fill (c) circle[radius = \ra pt] node [below, xshift =-1ex, yshift =0.2ex] {$(5, 9)$};
\fill[gray] (d) circle[radius = \ra pt] ;
\fill[] (d) node [below] {$(6, 8)$};
\fill (e) circle[radius = \ra pt] node [above, xshift=-0.3ex] {$(6, 2)$};
\fill (f) circle[radius = \ra pt] node [left] {$(7, 0)$};
\fill[gray] (g) circle[radius = \ra pt];
\fill[] (g)  node [below, yshift=0.2ex] {$(4, 7)$};
\fill[gray] (h) circle[radius = \ra pt];
\fill[] (h) node [left] {$(5, 6)$};

\fill (i)  circle[radius = \ra pt] node [right]  {$(10, 13)$};
\fill (j)  circle[radius = \ra pt] node [right]  {$(14, 12)$};
\fill (k)  circle[radius = \ra pt] node [ left]  {$(1, 7)~$};

\end{scope}

\begin{scope}
\fill (A) circle[radius = \ra pt] node [right, xshift =0.95ex] {$1\odot x_1^7x_2^3$};	
\fill (B) circle[radius = \ra pt] node [left, yshift =0.8ex, xshift=0.1ex] {$x_1^8x_2^2$};
\fill (C) circle[radius = \ra pt] node [above] {$3\odot x_1^5x_2^9$};
\fill[gray] (D) circle[radius = \ra pt];
\fill[] (D) node [right, xshift =  14ex, yshift  = 3.5ex] {$2 \odot x_1^6x_2^8$};

\draw[opacity=0.8, densely dashdotted] [\arrA] plot  [smooth] coordinates { (5.85 * \a, 8.15 * \a, 2 * \b+ \s)
	(3.5 * \a, 9.8 * \a, 0 * \b+ \s)
	(5 * \a, 10.8 * \a, 0 * \b+ \s)
	(1 * 3.5 * \a, 1 *  12.3 * \a, 0 * \b+ \s)
};

\fill (E) circle[radius = \ra pt] node [above left, yshift=-0.5ex, xshift =0.5ex] {$(-1)\odot x_1^6x_2$};	
\fill (F) circle[radius = \ra pt] node [yshift=-1.5ex, xshift=-4.5ex] {$(-2)\odot x_1^7$};
\fill[gray] (G) circle[radius = \ra pt];
\fill[] (G) node [above left, xshift = -13.7ex]  {$1\odot x_1^4x_2^7$} ;

\draw[opacity=0.8, densely dashdotted] [\arrA] plot  [smooth] coordinates {(3.9 * \a, 6.9 * \a, 1 * \b+ \s)
	(1 * \a, 5.5 * \a, 0 * \b+ \s)
	(2.5 * \a, 5 * \a, 0 * \b+ \s)
	(1.2 *  1.5 * \a, 1.2 * 3 * \a, 0 * \b+ \s)
};

\fill[gray] (H) circle[radius = \ra pt];
\fill[] (H) node [above left] {$x_1^5x_2^6$};

\fill (I) circle[radius = \ra pt] node [yshift=3ex, xshift=5ex, yshift=-1ex ] {$(-6)\odot x_1^{10}x_2^{13}$};
\fill (J) circle[radius = \ra pt] node [right, xshift=1ex] {$~(-6)\odot x_1^{14}x_2^{12}$};
\fill (K) circle[radius = \ra pt] node [above] {$x_1x_2^7$};
\end{scope}

%
\begin{scope}[shift={(0,0.5, 0)}]
\def\scalaxis{1}
\coordinate (axis_o) at (+5*\a, 13*\a , 6);
\coordinate (axis_a1) at (+5*\a +\scalaxis, 13*\a , 6);
\coordinate (axis_a2) at (+5*\a, 13+ \scalaxis , 6);
\coordinate (axis_c) at (+5*\a, 13*\a  ,  6 +\scalaxis);
\end{scope}

\draw[\arrB] (axis_o) -- (axis_c) node [above right] {$c$};
\draw[\arrB] (axis_o) -- (axis_a1) node [above left] {$a_1$};
\draw[\arrB] (axis_o) -- (axis_a2) node [above right] {$a_2$};	
\end{tikzpicture}
\end{adjustbox}
\vspace*{-4ex}
\caption{Left: The polytope associated with $h^{(2)}$ and its dual subdivision.
         Right: $\mathcal{P}(f^{(2)})$ and dual subdivision of $f^{(2)}$.
         In both figures, dual subdivisions have been translated along the $-c$ direction (downwards) and separated from the polytopes for visibility.}
\label{fig:eg-h2-f2}
\end{figure}


\section{Proofs}\label{sec:suppl-proofs}

\subsection{Proof of Corollary~\ref{cor:num-vert-on-uf}}
\begin{proof}
Let $V_1$ and $V_2$ be the sets of vertices on the upper and  lower envelopes of $P$ respectively.
By Theorem~\ref{thm:minkowski-face-bound}, $P$ has 
\[
n_1 \coloneqq 2 \sum_{j=0}^{d} \binom{m-1}{j}
\]
vertices in total. By construction, we have $| V_1 \cup V_2 | = n_1$. 
It is well-known that zonotopes are centrally symmetric and so there are equal number of vertices on the upper and lower envelopes, i.e., $|V_1| = |V_2|$.
Let $P' \coloneqq \pi(P)$ be the projection of $P$ into $\real^d$.  
Since the projected vertices are assumed to be in general positions, $P'$ must be a $d$-dimensional zonotope generated by $m$ nonparallel line segments.
Hence, by Theorem~\ref{thm:minkowski-face-bound} again, $P'$ has 
\[
n_2 \coloneqq 2 \sum_{j=0}^{d-1} \binom{m-1}{j}
\]
vertices.
For any vertex $v \in P$, $\pi(v)$ is a vertex of $P'$ if and only if $v$ belongs to both the upper and lower envelopes, i.e., $v \in V_1 \cap V_2$.
Therefore the number of vertices on $P'$ equals $|V_1 \cap V_2|$.  
By construction, we have $|V_1 \cap V_2| = n_2$.
Consequently the number of vertices on the upper envelope is
\[
|V_1| = \frac{1}{2}(|V_1 \cup V_2| - |V_1 \cap V_2|) + |V_1 \cap V_2|
      = \frac{1}{2}(n_1 - n_2) + n_2
      = \sum_{j=0}^{d} \binom{m}{j}.  \qedhere
\]
\end{proof}

\subsection{Proof of Proposition~\ref{prop:representation2}}
\label{proof:representation2}
\begin{proof}
Writing $A = A_+ - A_-$, we have
\begin{align*}
\rho^{(l+1)}({x}) &= \bigl( A_+ - A_- \bigr) \bigl( F^{(l)}(x) - G^{(l)}(x) \bigr) + b \\
              &= \bigl( A_+ F^{(l)}(x)+A_- G^{(l)}(x) + b \bigr) - \bigl( A_+ G^{(l)}(x) + A_- F^{(l)}(x) \bigr)  \\
              &= H^{(l+1)}(x) - G^{(l+1)}(x), \\
\nu^{(l+1)}(x) &= \max \bigl\{ \rho^{(l+1)}(y) , \, t \bigr\} \\
&= \max \bigl\{ H^{(l+1)}(x) - G^{(l+1)}(x),\, t\bigr\}\\
&= \max \bigl\{ H^{(l+1)}(x) ,\, G^{(l+1)}(x) + t\bigr\} - G^{(l+1)}(x) \\
&= F^{(l+1)}(x) - G^{(l+1)}(x). \qedhere
\end{align*}
\end{proof}

\subsection{Proof of Theorem~\ref{thm:RTF-is-FFNNReLU}}
\begin{proof}
It remains to establish the ``only if'' part. 
We will write $\sigma_t (x)\coloneqq  \max \{ x, t \}$.
Any tropical monomial $b_i x^{\alpha_i}$ is clearly such a neural network as
\[
		b_i x^{\alpha_i} = (\sigma_{-\infty} \circ \rho_i)(x) = \max \{ \alpha_i^\tp x + b_i  , -\infty \}.
\]
If two tropical polynomials $p$ and $q$ are represented as neural networks with $l_p$ and $l_q$ layers respectively,
		\begin{align*}
		p(x) &= \bigl( \sigma_{-\infty} \circ \rho^{(l_p)}_p \circ \sigma_0 \circ \dots \sigma_0 \circ \rho^{(1)}_p \bigr) (x), \\
		q(x) &= \bigl( \sigma_{-\infty} \circ \rho^{(l_q)}_q \circ \sigma_0 \circ \dots \sigma_0 \circ \rho^{(1)}_q \bigr) (x),
		\end{align*}
		then $(p \oplus q)(x) =  \max \{ p(x), q(x) \}$ can also be written as a neural network with $\max \{ l_p , \l_q \}+1$ layers:
		\[
		(p \oplus q)(x) = \sigma_{-\infty} \bigl( [ \sigma_0 \circ \rho_{1} ]( y(x) )
		+ [ \sigma_0 \circ \rho_{2} ]( y(x) )
		- [ \sigma_0 \circ \rho_{3} ]( y(x) ) \bigr),
		\]
		where $y : \real^d \to \real^2$ is given by $y(x) = (p(x) , q(x))$ and $\rho_{i} : \real^2 \to \real$, $i=1,2,3$, are linear functions defined by
		\[
		\rho_1( y ) = y_1 - y_2, \quad
		\rho_2( y ) = y_2, \quad
		\rho_3( y ) = -y_2.
		\]
		Thus, by induction, any tropical polynomial can be written as a neural network with ReLU activation.
		Observe also that if a tropical polynomial is the tropical sum of $r$ monomials, then it can be written as a neural network with no more than $\lceil \log_2 r \rceil+1$ layers.

		Next we consider a tropical rational function $(p\oslash q)(x) = p(x) - q(x)$ where $p$ and $q$ are tropical polynomials. Under the same assumptions, we can represent $p\oslash q$ as
\[
(p\oslash q)(x) = \sigma_{-\infty} \bigl( [\sigma_0 \circ \rho_{4}](y(x)) - [ \sigma_0 \circ \rho_{5} ](y(x)) 
	  + [\sigma_0 \circ \rho_{6}](y(x)) - [\sigma_0 \circ \rho_{7}](y(x)) \bigr)
\]
		where $\rho_{i} : \real^2 \to \real^2$, $i=4,5,6,7$, are linear functions defined by
		\[
		\rho_{4}(y) = y_1, \quad
		\rho_{5}(y) = -y_1, \quad
		\rho_{6}(y) = -y_2, \quad
		\rho_{7}(y) = y_2.
		\]
Therefore $p \oslash q$ is also a neural network  with at most $\max \{ l_p , \l_q \}+1$ layers.

Finally, if $f$ and $g$ are tropical polynomials that are respectively tropical sums of $r_f$ and $r_g$ monomials, then the discussions above show that $(f\oslash g) (x) = f(x) - g(x)$ is a neural network with at most $\max \{ \lceil \log_2 r_f \rceil , \, \lceil \log_2 r_g \rceil \}  + 2$ layers.
\end{proof}

\subsection{Proof of Proposition~\ref{prop:CPWL-is-RTF}}
\begin{proof}
It remains to establish the ``if'' part. 
Let $\mathbb{R}^d$ be divided into $N$ polyhedral region on each of which $\nu$ restricts to a linear function
\[
\ell_i(x) =  a_i^\tp x  + b_i, \quad a_i \in \mathbb{Z}^d, \quad b_i \in \real, \quad i=1, \dots , L,
\] 
i.e., for any $x \in \real^d$, $\nu(x) = \ell_i(x)$ for some $i \in \{1, \dots, L \}$.
It follows from \cite{tarela1999region} that we can find $N$ subsets of $\{ 1 , \dots, L \}$, denoted by $S_j$, $j=1, \dots , N$, so that $\nu$ has a representation
	\begin{align*}
	\nu (x) = \max_{j = 1, \dots , N} \min_{ i \in S_j } \ell_i.
	\end{align*}
It is clear that each $\ell_i$ is a tropical rational function.
Now for any tropical rational functions  $p$ and $q$,
		\[
		\min \{ p , q \} = - \max \{ -p , -q \} 
		= 0 \oslash [ (0 \oslash p) \oplus ( 0 \oslash q ) ]
		= [p \odot q] \oslash [ p \oplus q ].
		\]
		Since $p \odot q$ and $p \oplus q$ are both tropical rational functions, so is their tropical quotient. By induction, $\min_{i \in S_j} \ell_i$ is a tropical rational function for any $j=1,\dots,N$, and therefore so is their tropical sum $\nu$.
	\end{proof}

\subsection{Proof of Proposition~\ref{prop:tropsig}}
\begin{proof}
For a one-layer neural network  $\nu(x) =\max\{Ax +b, t\} = (\nu_1(x),\dots, \nu_p(x))$ with $A \in \mathbb{R}^{p \times d}$, $b \in \mathbb{R}^{p}$, $x\in \mathbb{R}^d$, $t \in (\mathbb{R}\cup \{-\infty\})^{p}$, we have
\[
\nu_k(x) =\biggl( b_k \odot  \bigodot_{j=1}^d x_j^{a_{kj}}\biggr)\oplus t_k
=\biggl( b_k \odot  \bigodot_{j=1}^d x_j^{a_{kj}}\biggr)\oplus \biggl(
t_k \odot \bigodot_{j=1}^d x_j^{0}\biggr), \qquad k =1,\dots,p.
\]
So for any $k=1,\dots,p$, if we write $\bar{b}_1 = b_k$, $\bar{b}_2 = t_k$, $\bar{a}_{1j} = a_{kj}$, $\bar{a}_{2j}= 0$, $j = 1, \dots, d$, then
\[
\nu_k(x) = \bigoplus_{i=1}^2 \bar{b}_i \bigodot_{j=1}^{d} x_j^{\bar{a}_{ij}}
\] 
is clearly a tropical signomial function. Therefore $\nu$ is a tropical signomial map. The result for arbitrary number of layers then follows from using the same recurrence as in the proof in Section~\ref{proof:representation2}, except that now the entries in the weight matrix are allowed to take real values, and the maps $H^{(l)}(x)$, $G^{(l)}(x)$, $F^{(l)}(x)$ are tropical signomial maps.  Hence every layer can be written as a tropical rational signomial map $\nu^{(l)} = F^{(l)}\oslash G^{(l)}$.
\end{proof}

\subsection{Proof of Proposition~\ref{prop:db}}

We prove a slightly more general result.
\begin{proposition}[Level sets]\label{prop:boundaries}
	Let $f \oslash g \in \RL(d,1) = \mathbb{T}(x_1,\dots,x_d)$. 
	\begin{enumerate}[\upshape (i), topsep=0ex, itemsep=0ex]
		\item\label{region1} Given a constant $c>0$, the \emph{level set}
		\[
		\mathcal{B} \coloneqq  \{ x \in \real^d : f(x) \oslash g(x) = c \}
		\]
	divides $\real^d$ into at most $\R(f)$ connected polyhedral regions where $f(x) \oslash g(x) > c$, and at most $\R(g)$ such regions where $f(x) \oslash g(x) <c$. 
	\item \label{region2}
	 If $c \in \real$ is such that there is no tropical monomial in $f(x)$ that differs from any tropical monomial in $g(x)$ by $c$, then the level set $\mathcal{B}$ is contained in a tropical hypersurface,
	\[
	\mathcal{B} \subseteq \mathcal{T} ( \max \{ f(x), \, g(x)+ c \} )
	=\mathcal{T} ( c \odot g \oplus f ).
	\]
	\end{enumerate}
\end{proposition}
\begin{proof} 
 We show that the bounds on the numbers of connected positive (i.e., above $c$) and negative (i.e., below $c$) regions   are as we claimed in \ref{region1}. The tropical hypersurface of $f$ divides $\real^d$ into $\R(f)$ convex regions $C_1, \dots, C_{\R(f)}$ such that $f$ is linear on each $C_i$. As $g$ is piecewise linear and convex over $\real^d$, $f\oslash g = f-g$ is piecewise linear and concave on each $C_i$.
Since the level set $\{x : f(x) - g(x) = c\}$ 
and the superlevel set $\{ x : f(x) - g(x) \geq c \}$ must be convex by the concavity of $f-g$, there is at most one positive region in each $C_i$.
Therefore the total number of connected positive regions cannot exceed $\R(f)$.
Likewise, the tropical hypersurface of $g$ divides $\real^d$ into $\R(g)$ convex regions on each of which
$f \oslash g$ is convex. The same argument shows that the number of connected negative regions does not exceed $\R(g)$.

We next address \ref{region2}. Upon rearranging terms, the level set becomes
\[
\mathcal{B} = \bigl\{ x \in \mathbb{R}^d :f(x) = g(x) +c \bigr\}.
\]		
Since $f(x)$ and $g(x)+c$ are both tropical polynomial, we  have 
\begin{align*}
f(x) &= b_1x^{\alpha_1}\oplus \dots \oplus b_rx^{\alpha_r}, \\
g(x)+c &= c_1 x^{\beta_1}\oplus \dots \oplus c_sx^{\beta_s},
\end{align*}
with  appropriate multiindices $\alpha_1,\dots,\alpha_r$, $\beta_1,\dots,\beta_s$, and real coefficients $b_1,\dots,b_r$, $c_1,\dots,c_s$.
By the assumption on the monomials, we have that $x_0 \in \mathcal{B}$ only if there exist $i,j$ so that $\alpha_i \neq \beta_j$ and  $b_i x_0^{\alpha_i}=  c_jx_0^{\beta_j}$.
This completes the proof since if we combine the monomials of $f(x)$ and $g(x)+c$ by (tropical) summing them into a single tropical polynomial,  $\max \{ f(x), \, g(x)+c \}$, the above implies that on the level set, the value of the combined tropical polynomial is attained by at least two monomials and therefore $x_0 \in \mathcal{T} ( \max\{ f(x), \, g(x)+ c \} )$.
\end{proof}
Proposition~\ref{prop:db} follows immediately  from Proposition~\ref{prop:boundaries} since the decision boundary  $\{x\in \mathbb{R}^d : \nu(x) = s^{-1}(c)\}$ is a level set of the tropical rational function $\nu$.

\subsection{Proof of Theorem~\ref{thm:main-bound}}
\label{prof:main-bound}

The linear regions of a tropical polynomial map $F \in \PL(d, m)$ are all convex but this is not necessarily the case for a  tropical rational map $F \in \RL(d,n)$.
Take for example a bivariate real-valued function $f(x,y)$ whose graph in $\real^3$ is a  pyramid with base $\{ (x , y) \in \real^2 : x,y \in [-1,1] \}$ and zero everywhere else,
then the linear region where $f$ vanishes is $\real^2 \setminus \{ (x , y) \in \real^2 : x,y \in [-1,1] \}$, which is nonconvex. The nonconvexity invalidates certain geometric arguments that only apply in  the convex setting. Nevertheless there is a way to subdivide each of the nonconvex linear regions into convex ones to get ourselves back into the convex setting. We will start with the number of \emph{convex} linear regions for tropical rational maps although later we will deduce the required results for the number of linear regions (without imposing convexity).

We first extend the notion of tropical hypersurface to tropical rational maps:
Given a tropical rational map $F \in \RL(d,m)$, we define $\mathcal{T}(F)$ to be the boundaries between adjacent linear regions. 
When $F = (f_1,\dots,f_m) \in \PL(d,m)$, i.e., a tropical polynomial map, this set is exactly the union of tropical hypersurfaces $\mathcal{T}(f_i)$, $i=1, \dots, m$. 
Therefore this definition of $\mathcal{T}(F)$ extends Definition~\ref{def:trophype}.

For a tropical rational map $F$, we will examine the smallest number of convex regions that form a refinement of $\mathcal{T}(F)$.
For brevity, we  will call this  the \emph{convex degree} of $F$; for consistency, the number of linear regions of $F$ we will call its \emph{linear degree}. We define convex degree formally below.  We will write $\restr{F}{C}$ to mean the restriction of map $F$ to $C \subseteq \real^d$.
\begin{definition}\label{def:aff_rest} 
The \emph{convex degree} of a tropical rational map $F \in \RL(d, n)$ is the minimum division of $\mathbb{R}^d$ into convex regions over which $F$ is linear, i.e.
	\[
	\pR(F)\coloneqq	 \min \bigl\{n :   C_1\cup \dots \cup C_n = \mathbb{R}^d,\; C_i\; \text{convex,}\; \restr{F}{C_i} \; \text{linear}\bigr\}.
	\]
Note that $C_1, \dots, C_{\pR(F)}$ either divide $\mathbb{R}^d$ into the same regions as $\mathcal{T}(F)$ or form a refinement.

	For $m \leq d$, we will denote by $\pR(F \mid m)$ the maximum convex degree obtained by restricting $F$ to an $m$-dimensional affine subspace in $\real^d$,
	i.e.,
	\[
	\pR(F \mid m) \coloneqq \max \bigl\{ \pR( F\vert_{\Omega} ) : \Omega \subseteq \real^d\; \text{is an $m$-dimensional affine space}\bigr\}.
	\]
\end{definition}

For any $F \in \RL(d, n)$, there is at least one tropical polynomial map that subdivides $\mathcal{T}(F)$, and so convex degree is well-defined (e.g., if $F = (p_1 \oslash q_1,\dots, p_n \oslash q_n)\in \RL(d, n)$, then we may choose $P = (p_1, \dots, p_n, q_1, \dots, q_n)\in \PL(d, 2n)$). Since the linear regions of a tropical polynomial map are always convex, we have $\R(F) = \pR(F)$ for any $F \in \PL(d,n)$. 

Let $F =(f_1,\dots,f_n) \in \RL(d,n)$  and $\alpha = (a_1, \dots, a_n) \in \mathbb{Z}^n$. Consider the tropical rational function\footnote{This is in the sense of a tropical power but we stay consistent to our slight abuse of notation and write $F^\alpha$ instead of $F^{\odot \alpha}$.}
\[
F^\alpha \coloneqq \alpha^\tp F = a_1 f_1 +\dots + a_n f_n =  \bigodot_{j=1}^n f_j^{a_j} \in \RL(d, 1).
\]
For some $\alpha$, $F^\alpha$ may have fewer linear regions than $F$, e.g, $\alpha = (0,\dots, 0)$. As such, we need the following notion.
\begin{definition}
$\alpha = (a_1, \dots, a_n) \in \mathbb{Z}^n$ is said to be a \emph{general exponent} of $F \in \RL(d,n)$ if the linear regions of $F^\alpha$ and the linear regions of $F$ are identical. 
\end{definition}
We show that  general exponent always exists  for any $F \in \RL(d,n)$  and may be chosen to have all entries nonnegative.
\begin{lemma}
\label{lemma:general_exp}
Let $F \in \RL(d, n)$. Then 
	\begin{enumerate}[\upshape (i), topsep=0ex, itemsep=0ex]
		\item\label{exp1} $\R(F^\alpha) = \R(F)$  if and only if $\alpha$ is a general exponent;
		\item\label{exp2} $F$ has a general exponent $\alpha \in \mathbb{N}^n$.
	\end{enumerate} 
\end{lemma}
\begin{proof}
It follows from the definition of tropical hypersuface that $\mathcal{T}(F^\alpha)$ and $\mathcal{T}(F)$ comprise respectively the points $x\in \real^d$ at which $F^\alpha$ and $F$ are not differentiable.  
Hence $\mathcal{T}(F^\alpha) \subseteq \mathcal{T}(F)$, which implies that $\R(F^\alpha) < \R(F)$ unless $\mathcal{T}(F^\alpha) = \mathcal{T}(F)$.
This concludes \ref{exp1}.  

For \ref{exp2}, we need to show that there always exists an $\alpha \in \mathbb{N}^n$ such that $F^\alpha$ divides its domain $\real^d$ into the same set of linear regions as $F$.
In other words, for every pair of adjacent linear regions of $F$, the $(d-1)$-dimensional face in $\mathcal{T}(F)$ that separates them is also present in $\mathcal{T}(F^\alpha)$ and so $\mathcal{T}(F^\alpha) \supseteq \mathcal{T}(F)$.

Let $L$ and $M$ be adjacent linear regions of $F$.  The differentials of $\restr{F}{L}$ and $\restr{F}{M}$ must have integer coordinates, i.e., $\restr{dF}{L}, \restr{dF}{M}\in \mathbb{Z}^{n \times d}$.
Since $L$ and $M$ are distinct linear regions, we must have $\restr{dF}{L}\neq \restr{dF}{M}$ (or otherwise $L$ and $M$ can be merged into a single linear region).
Note that the differentials of $\restr{F^\alpha}{L}$ and $\restr{F^\alpha}{M}$ are given by $\alpha^\tp \restr{dF}{L}$ and $\alpha^\tp \restr{dF}{M}$.

To ensure the $(d-1)$-dimensional face separating $L$ and $M$ still exists in $\mathcal{T}(F^\alpha)$, we need to choose $\alpha$ so that $\alpha^\tp \restr{dF}{L} \neq \alpha^\tp \restr{dF}{M}$. Observe that the solution to $(\restr{dF}{L} - \restr{dF}{M})^\tp \alpha = 0$ is contained in a one-dimensional subspace of $\real^n$.

Let $\mathcal{A}(F)$ be the collection of all pairs of adjacent linear regions of $F$.  Since the set of $\alpha$ that degenerates two adjacent linear regions into a single one, i.e.,
\[
\mathcal{S} \coloneqq \bigcup_{(L, M) \in \mathcal{A}(F)} \bigl\{ \alpha \in \mathbb{N}^n :  (\restr{dF}{L} - \restr{dF}{M})^\tp \alpha = 0)\bigr\},
\]
is contained in a union of a finite number of hyperplanes in $\real^n$, $\mathcal{S}$ cannot cover the entire lattice of nonnegative integers $\mathbb{N}^n$.
Therefore the set $\mathbb{N}^n \cap ( \real^n \setminus \mathcal{S} )$ is nonempty and any of its element is a general exponent for $F$.
\end{proof}
Lemma~\ref{lemma:general_exp} shows that we may study the linear degree of a tropical rational map by studying that of  a tropical rational function, for which the results in Section~\ref{sec:transform-tropical-poly} apply.

We are now ready to prove a key result on the convex degree of composition of tropical rational maps.
\begin{theorem}\label{theorem:comp_transform}
	Let $F = (f_1,\dots,f_m) \in \RL(n, m)$ and  $G \in \RL(d,n)$. Define $H = (h_1,\dots,h_m) \in\RL(d,m)$ by
	\[
	h_i\coloneqq f_i \circ G , \qquad i = 1, \dots, m.
	\]
	Then
	\[
	\R(H) \leq \pR(H) \leq \pR(F \mid d)
	\cdot \pR(G).
	\]
\end{theorem}
\begin{proof}
	Only the upper bound requires a proof. Let $k =\pR(G)$. By the definition of $\pR(G)$, there exist convex sets $C_1,\dots,C_k \subseteq \mathbb{R}^d$ whose union is $\mathbb{R}^d$ and on each of which $G$ is linear. So $\restr{G}{C_i}$ is some affine function $\rho_i$. For any $i$,
	\[
	\pR(F \circ \rho_i)\leq \pR(F \mid d),
	\]
by the definition of $\pR(F \mid d)$. Since $F \circ G = F \circ \rho_i$ on $C_i$, we have 
\begin{gather*}
	\pR(F \circ G) \leq \sum_{i=1}^{k}\pR(F\circ \rho_i).
\shortintertext{Hence}
	\pR(F \circ G) \leq \sum_{i=1}^{k}\pR(F\circ \rho_i) \leq \sum_{i=1}^{k} \pR(F \mid d) = \pR(F \mid d) \cdot \pR(G). \qedhere
\end{gather*}
\end{proof}

We now apply our observations on tropical rational functions to neural networks. The next lemma follows directly from Corollary~\ref{cor:num-vert-on-uf}.
\begin{lemma}\label{lemma:layer_tranform} Let  $\sigma^{(l)} \circ \rho^{(l)} : \mathbb{R}^{n_{l-1}} \to \mathbb{R}^{n_l}$ where $ \sigma^{(l)}$ and $\rho^{(l)}$ are the affine transformation and activation of the $l$th layer of a neural network. If $d \leq n_{l}$, then
	\[
	\pR(\sigma^{(l)} \circ \rho^{(l)}  \mid d) \leq \sum_{i=0}^{d} \binom{n_{l}}{i}.
	\]
\end{lemma}

\begin{proof}
$\pR(\sigma^{(l)} \circ \rho^{(l)}  \mid d)$ is the maximum convex degree of  a tropical rational map $F =(f_1,\dots,f_{n_l}) : \real^d \to \real^{n_l}$ of the form
\[
f_i(x) \coloneqq \sigma^{(l)}_i \circ \rho^{(l)} \circ (b_1 \odot x^{\alpha_1},\dots, b_{n_{l-1}} \odot x^{\alpha_{n_{l-1}}}),\qquad i = 1,\dots, n_l.
\]
For a general affine transformation $\rho^{(l)} $,
\[
\rho^{(l)} (b_1 \odot x^{\alpha_1},\dots, b_{n_{l-1}} \odot x^{\alpha_{n_{l-1}}}) = \bigl(b'_1 \odot x^{\alpha'_1},\dots, b'_{n_{l}} \odot x^{\alpha'_{n_{l}}}\bigr) \eqqcolon G(x)
\]
for some $\alpha'_1,\dots,\alpha'_{n_l}$ and $b'_1,\dots,b'_{n_l}$, and we denote this map by $G : \real^d \to \real^{n_l}$. So $f_i =  \sigma_i^{(l)} \circ G$. By Theorem~\ref{theorem:comp_transform},
we have $\pR(\sigma^{(l)} \circ \rho^{(l)}  \mid d) = 	\pR(\sigma^{(l)} \mid d) \cdot \pR(G) = \pR(\sigma^{(l)} \mid d)$; note that $\pR(G) = 1$ as $G$ is a linear function.

We have thus reduced the problem to determining a bound on the convex degree of a single layer neural network with $n_l$ nodes  $\nu = (\nu_1,\dots,\nu_{n_l}) : \mathbb{R}^d \to \mathbb{R}^{n_l}$. 
Let $\gamma = (c_1, \dots, c_{n_l}) \in \mathbb{N}^{n_l}$ be a nonnegative general exponent for $\nu$. Note that
\begin{align*}
\bigodot\limits_{j=1}^{n_l} \nu_j^{c_j} =
\bigodot\limits_{j=1}^{n_l} \biggl[\biggl(\bigodot\limits_{i=1}^{d} b_i \odot x^{a^{+}_{ji}} \biggr) 
\oplus
\biggl( \bigodot\limits_{i=1}^{d} x^{a^{-}_{ji}} \biggr)\odot  t_j \biggr]^{c_j}
- 
\bigodot\limits_{j=1}^{n_l}\biggl(\bigodot\limits_{i=1}^{d} x^{a^{-}_{ji}} \biggr)^{c_j}.
\end{align*}
Since the last term is linear in $x$, we may drop it without affecting the convex degree of the entire expression. It remains to determine an upper bound for the number of linear regions of the tropical polynomial 
	\[
	h(x) = \bigodot\limits_{j=1}^{n_l}\biggl[	\biggl(\bigodot\limits_{i=1}^{d} b_i \odot x^{a^{+}_{ji}} \biggr) 
	\oplus \biggl( \bigodot\limits_{i=1}^{d} x^{a^{-}_{ji}} \biggr)\odot  t_j \biggr]^{c_j},
	\]
which we will obtain by counting vertices of the polytope $\cP(h)$.  By Propositions~\ref{prop:polytope-exp} and \ref{prop:polytope-ops} the polytope $\cP(h)$ is given by a weighted Minkowski sum
\begin{gather*}
	\sum_{j=1}^{n_l} c_j \cP\biggl[\biggl(\bigodot\limits_{i=1}^{d} b_i \odot x^{a^{+}_{ji}} \biggr) 
	\oplus	\biggl(	\bigodot\limits_{i=1}^{d} x^{a^{-}_{ji}} \biggr)\odot  t_j \biggr]. 
\shortintertext{By Proposition~\ref{prop:polytope-ops} again,}
	\cP\biggl[\biggl(\bigodot\limits_{i=1}^{d} b_i \odot x^{a^{+}_{ji}}	\biggr) 
	\oplus	\biggl(	\bigodot\limits_{i=1}^{d} x^{a^{-}_{ji}} \biggr)\odot  t_j \biggr]
=\operatorname{Conv} \bigl( \cV ( \cP(f) ) \cup \cV ( \cP(g) ) \bigr)
\shortintertext{where}
	 f(x) = \bigodot\limits_{i=1}^{d} b_i \odot x^{a^{+}_{ji}} \qquad
	 \text{and} \qquad
     g(x) =  \biggl(\bigodot\limits_{i=1}^{d} x^{a^{-}_{ji}} \biggr)  \odot  t_j
\end{gather*}
are tropical monomials.  Therefore $\cP(f)$, $\cP(g)$ are just points in $\mathbb{R}^{d+1}$ and  $\operatorname{Conv} \bigl( \cV ( \cP(f) ) \cup \cV ( \cP(g) ) \bigr)$ is a line in $\mathbb{R}^{d+1}$.
Hence $\cP(h)$ is a Minkowski sum of $n_{l}$ line segments in $\mathbb{R}^{d+1}$, i.e., a zonotope, and Corollary~\ref{cor:num-vert-on-uf} completes the proof.
\end{proof}

Using Lemma~\ref{lemma:layer_tranform}, we obtain a bound on the number of linear regions created by one layer of a neural network.
\begin{theorem}\label{thm:main}
Let $\nu : \mathbb{R}^d \to \mathbb{R}^{n_L}$ be an $L$-layer neural network satisfying assumptions \ref{ass1}--\ref{ass3} with  $F^{(l)}$,  $G^{(l)}$,$H^{(l)}$, and $\nu^{(l)}$ as defined in Proposition~\ref{prop:representation2}. Let $n_{l} \geq d$ for all $l=1,\dots,L$. Then
\[
\pR(\nu^{(1)})	= \R(G^{(1)}) = \R(H^{(1)}) = 1, \qquad
\pR(\nu^{(l+1)}) \leq \pR(\nu^{(l)}) \cdot  \sum_{i=0}^{d} \binom{n_{l+1}}{i}.
\]
\end{theorem}
\begin{proof}
The $l=1$ case  follows from the fact that $G^{(1)}(x) = A^{(1)}_{-}x$ and $H^{(1)}(x) = A^{(1)}_{+}x +b^{(1)}$ are both linear, which in turn forces $\pR(\nu^{(1)}) = 1$ as in the proof of Lemma~\ref{lemma:layer_tranform}. 
Since $\nu^{(l)} = (\sigma^{(l)}\circ \rho^{(l)}) \circ \nu^{(l-1)}$, the recursive bound follows from Theorem~\ref{theorem:comp_transform} and Lemma~\ref{lemma:layer_tranform}.
\end{proof}

Theorem~\ref{thm:main-bound} follows from applying Theorem~\ref{thm:main} recursively.


\ifdefined\includeSuppl
\else

\bibliography{reference}
\bibliographystyle{icml2018}
\fi

}
{}

%
%
%

\end{document}